\newcolumntype{b}{>{\columncolor[gray]{0.8}} c}
\newtheorem{theorem}{Theorem}
\newtheorem{remark}{Remark}
\newtheorem{lemma}[theorem]{Lemma}
\def\1{\bm{1}}
\def\eps{{\epsilon}}
\def\rva{{\mathbf{a}}}
\def\rvb{{\mathbf{b}}}
\def\rvc{{\mathbf{c}}}
\def\rvd{{\mathbf{d}}}
\def\rve{{\mathbf{e}}}
\def\rvg{{\mathbf{g}}}
\def\rvh{{\mathbf{h}}}
\def\rvw{{\mathbf{w}}}
\def\rvx{{\mathbf{x}}}
\def\rmA{{\mathbf{A}}}
\def\rmE{{\mathbf{E}}}
\def\rmG{{\mathbf{G}}}
\def\rmI{{\mathbf{I}}}
\def\rmV{{\mathbf{V}}}
\def\rmW{{\mathbf{W}}}
\def\rmX{{\mathbf{X}}}
\def\vzero{{\bm{0}}}
\def\vone{{\bm{1}}}
\def\mLambda{{\bm{\Lambda}}}
\def\mDelta{{\bm{\Delta}}}
\DeclareMathAlphabet{\mathsfit}{\encodingdefault}{\sfdefault}{m}{sl}
\SetMathAlphabet{\mathsfit}{bold}{\encodingdefault}{\sfdefault}{bx}{n}
\def\sS{{\mathbb{S}}}
\newcommand{\R}{\mathbb{R}}
\title{Automatic Perturbation Analysis for\\Scalable Certified Robustness and Beyond}
\author{
Kaidi Xu\textsuperscript{1,*},\enskip Zhouxing Shi\textsuperscript{2,*},\enskip Huan Zhang\textsuperscript{3,*},\enskip Yihan Wang\textsuperscript{2} \\
\rule[10pt]{0pt}{0pt}
\textbf{Kai-Wei Chang\textsuperscript{3}, Minlie Huang\textsuperscript{4}, Bhavya Kailkhura\textsuperscript{5}, Xue Lin\textsuperscript{1}, Cho-Jui Hsieh\textsuperscript{3}} \\
\rule[15pt]{0pt}{0pt}
{\normalsize \textsuperscript{1}Northeastern University \enskip \textsuperscript{2}Tsinghua University \enskip \textsuperscript{3}UCLA}\\
{\normalsize \textsuperscript{4}DCST, THUAI, SKLits, BNRist, Tsinghua University}
{\, \normalsize \textsuperscript{5}Lawrence Livermore National Laboratory}\\
{\tt\small \{xu.kaid, xue.lin\}@northeastern.edu, zhouxingshichn@gmail.com, huan@huan-zhang.com},\\
{\tt \small wangyihan617@gmail.com, kw@kwchang.net, aihuang@tsinghua.edu.cn},\\ 
{\tt\small kailkhura1@llnl.gov, chohsieh@cs.ucla.edu}\\
\rule[15pt]{0pt}{0pt}
\textsuperscript{*}{\it Kaidi Xu, Zhouxing Shi and Huan Zhang contributed equally}
}
\date{}
\begin{document}

\maketitle

\setlength{\abovedisplayskip}{3.0pt plus 0.0pt minus 2.0pt}
\setlength{\belowdisplayskip}{3.0pt plus 0.0pt minus 2.0pt}
\setlength{\abovedisplayshortskip}{0.0pt plus 1.0pt}
\setlength{\belowdisplayshortskip}{2.0pt plus 0.0pt minus 1.0pt}


\begin{abstract}
Linear relaxation based perturbation analysis (LiRPA) for neural networks, which computes provable linear bounds of output neurons given a certain amount of input perturbation, has become a core component in robustness verification and certified defense. The majority of LiRPA-based methods focus on simple feed-forward networks and need particular manual derivations and implementations when extended to other architectures. In this paper, we develop an {\it automatic} framework to enable perturbation analysis on any neural network structures, by generalizing existing LiRPA algorithms such as CROWN to operate on general computational graphs. The flexibility, differentiability and ease of use of our framework allow us to obtain state-of-the-art results on LiRPA based certified defense on fairly complicated networks like DenseNet, ResNeXt and Transformer that are not supported by prior works. Our framework also enables {\it loss fusion}, a technique that significantly reduces the computational complexity of LiRPA for certified defense. For the first time, we demonstrate LiRPA based certified defense on Tiny ImageNet and Downscaled ImageNet where previous approaches cannot scale to due to the relatively large number of classes. Our work also yields an open-source library for the community to apply LiRPA to areas beyond certified defense without much LiRPA expertise, e.g., we create a neural network with a provably flat optimization landscape by applying LiRPA to network parameters. Our open source library is available at \textcolor{blue}{\url{https://github.com/KaidiXu/auto_LiRPA}}.
\end{abstract}

\section{Introduction}

Bounding the range of a neural network outputs given a certain amount of input perturbation has become an important theme for neural network verification and certified adversarial  defense~\citep{kolter2017provable,mirman2018differentiable,wang2018mixtrain,zhang2019towards}. 
However, computing the exact bounds for output neurons is usually intractable~\cite{katz2017reluplex}. 
Recent research studies have developed perturbation analysis bounds that are sound,
computationally feasible, and relatively tight~\citep{kolter2017provable,zhang2018efficient,singh2019abstract,weng2018towards,singh2018fast,wang2018efficient}. For a neural network function $f(\rvx) \in \R$, to study its behaviour at $\rvx_0$ with bounded perturbation $\bm{\delta}$ such that $\rvx=\rvx_0+\delta\in \sS$ (e.g., $\sS$ is a $\ell_p$ norm ball around $\rvx_0$), these works provide two linear functions 
$\underline{f}(\rvx) := \underline{\rva}^\top \rvx + \underline{\rvb}$ and $\overline{f}(\rvx) := \overline{\rva}^\top \rvx + \overline{\rvb}$ 
that are guaranteed lower and upper bounds respectively for output neurons w.r.t. the input under perturbation:
$\underline{f}(\rvx) \leq f(\rvx) \leq \overline{f}(\rvx) \ (\forall \rvx\in\sS)$. 
We refer to this line of work as a {\bf Li}near {\bf R}elaxation based {\bf P}erturbation {\bf A}nalysis ({\bf LiRPA}).
Beyond its usage in neural network verification and certified defense, LiRPA is capable to serve as a general toolbox to understand the behavior of deep neural networks (DNNs) within a predefined input region, and has been demonstrated useful for interpretation and explanation of DNNs~\cite{ko2019popqorn,shi2020robustness}.

To compute LiRPA bounds, the first step is to obtain linear relaxations of any non-linear units~\citep{zhang2018efficient,salman2019convex} (e.g., activation functions) in a network. Then, these relaxations need to be ``glued'' together according to the network structure to obtain the final bounds. Early developments of LiRPA focused on feed-forward networks, and it has been extended to a few more complicated network structures for real-world applications. For example, \citet{wong2018scaling} implemented LiRPA for convolutional ResNet on computer vision tasks; \citet{zugner2019certifiable} extended~\cite{kolter2017provable} to graph convolutional networks; \citet{ko2019popqorn} and  \citet{shi2020robustness} extended CROWN~\cite{zhang2018efficient} to recurrent neural networks and Transformers respectively. Unfortunately, each of these works extends LiRPA with an ad-hoc implementation that only works for a specific network architecture.
This is similar to the ``pre-automatic differentiation'' era where researchers have to implement gradient computation by themselves for their designed network structure. Since LiRPA is significantly more complicated than backpropagation, non-experts in neural network verification can find it challenging to understand and use LiRPA for their purpose.

Our paper takes a big leap towards making LiRPA a useful tool for general machine learning audience, by generalize existing LiRPA algorithms to general computational graphs. Our framework is a superset of many existing works~\citep{wong2018provable,zhang2018efficient,weng2018towards,ko2019popqorn,shi2020robustness}, and 
our automatic perturbation analysis algorithm is analogous to automatic differentiation.
Our algorithm can compute LiRPA automatically for a given PyTorch model without manual derivation or implementation for the specific network architecture. Importantly, our LiRPA bounds are differentiable which allows efficient training of these bounds.
In addition, our proposed framework enables the following contributions: 
\begin{itemize}[wide,itemsep=0pt]
    \item The flexibility and ease-of-use of our framework allow us to easily obtain state-of-the-art certified defense results for fairly complicated networks, such as DenseNet, ResNeXt and Transfomer that no existing work supports due to tremendous efforts required for manual LiRPA implementation.
    \item We propose {\it loss fusion}, a technique that significantly reduces the computational complexity of LiPRA for certified defense. We demonstrate the first LiPRA-based certified defense training on Tiny ImageNet and Downscaled ImageNet~\citep{chrabaszcz2017downsampled}, with a \emph{two-magnitude improvement} on training efficiency.
    \item Our framework allows flexible perturbation specifications beyond $\ell_p$-balls. For example, we demonstrate a {\it dynamic programming} approach to concretize linear bounds under discrete perturbation of synonym-based word substitution in a sentiment analysis task.
    \item We showcase that LiRPA can be a \emph{valuable tool beyond adversarial robustness}, by demonstrating how to create a neural network with a provably flat optimization landscape and revisit a popular hypothesis on generalization and the flatness of optimization landscape. This is enabled by our unified treatment and automatic derivation of LiRPA bounds for parameter space variables (model weights).
\end{itemize}

\setlength{\textfloatsep}{7pt}
\setlength{\floatsep}{5pt}
\setlength{\intextsep}{5pt}

\section{Background and Related Work}
Giving certified lower and upper bounds for neural networks under input perturbations is the core problem in robustness verification of neural networks. Early works formulated robustness verification for ReLU networks as satisfiability modulo theory (SMT) and integer linear programming (ILP) problems~\cite{ehlers2017formal,katz2017reluplex,tjeng2017evaluating}, which are hardly feasible even for a MNIST-scale small network. \citet{wong2018provable} proposed to relax the verification problem with linear programming and investigated its dual solution. Many other works have independently discovered similar algorithms~\cite{dvijotham2018dual,mirman2018differentiable,singh2018fast,weng2018towards,zhang2018efficient,singh2019abstract,wang2018efficient} in either primal or dual space which we refer to as linear relaxation based perturbation analysis (LiRPA). Recently, \citet{salman2019convex} unified these algorithms under the framework of convex relaxation. Among them, CROWN~\cite{zhang2018efficient} and DeepPoly~\cite{singh2019abstract} achieve the tightest bound for efficient single neuron linear relaxation and are representative algorithms of LiRPA. Several further refinements for the LiRPA bounding process were also proposed recently, including using an optimizer to choose better linear bounds~\citep{dvijotham2018training,lyu2019fastened}, relaxing multiple neurons~\citep{singh2019beyond} or further tighten convex relaxations~\citep{tjandraatmadja2020convex}, but these methods typically involve much higher computational costs. The contribution of our work is to extend LiRPA to its most general form, and allow automatic derivation and computation for general network architectures. Additionally, our framework allows a general purpose perturbation analysis for any nodes in the graph and flexible perturbation specifications, not limiting to perturbations on input nodes or $\ell_p$-ball perturbation specifications. This allows us to use LiRPA as a general tool beyond robustness verification.

The neural network verification problem can also be solved via many other techniques, for example, semidefinite programming~\cite{dvijothamefficient2019,raghunathan2018semidefinite}, bounding local or global Lipschitz constant~\cite{hein2017formal,raghunathan2018semidefinite,zhang2018recurjac}. However, LiRPA based verification methods typically scale much better than alternatives, and they are a keystone for many state-of-the-art certified defense methods.
Certified adversarial defenses typically seek for a guaranteed upper bound on test error, which can be efficiently obtained using LiRPA bounds. By incorporating the bounds into the training process (which requires them to be efficient and differentiable), a network can become certifiably robust~\cite{kolter2017provable,mirman2018differentiable,wang2018mixtrain,gowal2018effectiveness,zhang2019enhancing}.
In addition, while interval bound propagation (IBP)~\cite{mirman2018differentiable,gowal2018effectiveness} that propagates constant bounding intervals can be easily extended to general computational graphs, 
bounds computed by IBP can be very loose and make stable training challenging~\citep{zhang2019towards}. 
Along with these methods, randomization based probabilistic defenses have been proposed~\citep{cohen2019certified,li2018second,lecuyer2019certified,salman2019provably}, but in this work we mostly focus on LiRPA based deterministic certified defense method.

Backpropagation~\cite{rumelhart1986learning} is a classic algorithm to compute the gradients of a complex error function. It can be applied automatically once the forward computation is defined, without manual derivation of gradients. It is essential in most deep learning frameworks, such as TensorFlow~\cite{abadi2016tensorflow} and PyTorch~\cite{NIPS2019_9015}. The backward \emph{bound} propagation in our framework is analogous to backpropagation as our computation is also automatic given the computational graph created by forward propagation, but we aim to automatically derive bounds for output neurons instead of gradients. Our algorithm is significantly more complicated. On the other hand, LiRPA based bounds have been implemented manually in many previous works~\citep{wong2018provable,zhang2018efficient,wang2018mixtrain,maurereran2018}, but they mostly focus on specific types of networks (e.g., feedforward or residual networks) for their empirical study, and do not have the flexibility to generalize to general computational graphs and irregular networks.

\section{Algorithm}

\begin{table}[tb]
\centering
\caption{Table of Notations}
\adjustbox{max width=\textwidth}{
\begin{tabular}{ll|ll}
\hline
Symbol & Meanings & Symbol & Meanings\\ \hline
$i$, $j$, $k$ & Any node on a computational graph & $\rvx_i$ & Value of an independent node, typically model input or parameters. \\
$o$ & Output node on a computational graph& $\underline{\rvh}_i$, $\overline{\rvh}_i$ & Lower/upper bound of node $i$ respectively  \\
$m(i)$ & In-degree of node $i$ & $\underline{\rmW}_i,\underline{\rvb}_i$, $\overline{\rmW}_i, \overline{\rvb}_i$ & Parameters of linear lower/upper bounds of node $i$ respectively\\ 
$u(i)$ & Set of predecessor nodes (inputs) of node $i$ & $\underline{\rmA}_i$, $\overline{\rmA}_i$ & Linear coefficients of $h_i(\rmX)$ terms in the linear lower/upper bounds of $h_o(\rmX)$ \\
$\sS$ & The space of the perturbed input& $\underline{\rvd}, \overline{\rvd}$ & Bias terms in the linear lower/upper bounds of $h_o(\rmX)$ during bound propagation\\
$\rmX$ & Concatenation of all $\rvx_i$ (assumed flattened) & $h_i(\rmX)$ & Computed value of node $i$ on a computational graph \\
\hline
\end{tabular}
}
\end{table}

\subsection{Framework of Perturbation Analysis on General computational Graphs}
\paragraph{Notations} We define a computational graph as a Directed Acyclic Graph (DAG) $\rmG = (\rmV, \rmE)$. 
$\rmV=\{1, 2, \cdots, n\}$ is a set of nodes in $\rmG$. $\rmE$ is a set of node pairs $(i, j)$ which denotes that node $i$ is an input argument of node $j$. For simplicity, we denote the in-degree of node $i$ as $m(i)$, and the set of input nodes for node $i$ as $u(i) = \{u_1(i),\cdots,u_{m(i)}(i)\}$ where $(u_j(i), i) \in \rmE, 1\leq j\leq m(i)$.  
Each node $i$ has a few associated attributes: $H_i(\cdot)$ is the associated computation function, $\rvh_i = H_i(u(i))$ is the vector produced by node $i$. Although $\rvh_i$ can be a tensor in practice, we assume it has been flattened into a vector for simplicity in this paper. Each node $i$ is either an \emph{independent node} with $m(i)\!=\!0$ representing the input nodes of the graph (e.g., network parameters, model inputs), or a \emph{dependent node} representing some computations (e.g., ReLU, MatMul). 
For independent nodes, $H_i$ is an identity function and we denote $\rvh_i\!=\!\rvx_i$.
We let $\rmX$ be the concatenation of all $\rvx_i$, such that the output of each node $i$ can be written as a function of $\rmX$, $\rvh_i\!=\!h_i(\rmX)$, without explicitly referring to $u_{j}(i)$. Without losing generality, we assume that the computational graph has a single output node $o$.
To conduct perturbation analysis, we consider $\rvx_i$ to be arbitrarily taken from an \emph{input space} $\sS_i$. In particular, if $\rvx_i$ is not perturbed, $\sS_i=\{\rvc_i\}$ and $\rvc_i$ is a constant vector.
We denote $\sS$ to be the space of $\rmX$ when each part of $\rmX$, $\rvx_i$, is perturbed within $\sS_i$ respectively.

\paragraph{Linear Relaxation based Perturbation Analysis (LiRPA)} 
Our final goal is to compute provable lower and upper bounds for the value of output node $h_o(\rmX)$, i.e., lower bound $\underline{\rvh}_o$ and upper bound $\overline{\rvh}_o$ (element-wise), when $\rmX$ is perturbed within $\sS$:
$    \underline{\rvh}_o \leq h_o(\rmX) \leq \overline{\rvh}_o, \enskip \forall \rmX\in \sS$.
In LiRPA, we find tight lower and upper bounds by first computing linear bounds w.r.t. $\rmX$:
\begin{equation}
    \underline{\rmW}_o \rmX + \underline{\rvb}_o \leq h_o(\rmX) \leq \overline{\rmW}_o \rmX + \overline{\rvb}_o \quad \forall \rmX \in \sS,
    \label{eq:linear_bounds}
\end{equation}
where $h_o(\rmX)$ is bounded by linear functions of $\rmX$ with parameters $\underline{\rmW}_o, \underline{\rvb}_o, \overline{\rmW}_o, \overline{\rvb}_o$. We generalize existing LiRPA approaches into two categories: \emph{forward mode} perturbation analysis and \emph{backward mode} perturbation analysis. Both methods aim to obtain bounds~\eqref{eq:linear_bounds} in different manners:

\begin{itemize}[wide,itemsep=0pt]
\item \textbf{Forward mode}: forward mode LiRPA propagates the linear bounds of each node w.r.t. all the independent nodes, i.e., linear bounds w.r.t. $\rmX$, to its successor nodes in a forward manner, until reaching the \emph{output node} $o$. 
\item \textbf{Backward mode}: backward mode LiRPA propagates the linear bounds of \emph{output node} $o$ w.r.t. \emph{dependent nodes} to further predecessor nodes in a backward manner, until reaching all the \emph{independent nodes}.

\end{itemize}
We describe these two different modes in details below.

\paragraph{Forward Mode LiRPA on General Computation Graphs} For each node $i$ on the graph, 
we compute the linear bounds of $h_i(\rmX)$ w.r.t. all the independent nodes:
\[\small
    \underline{\rmW}_i \rmX + \underline{\rvb}_i \leq h_i(\rmX) \leq \overline{\rmW}_i \rmX + \overline{\rvb}_i \quad \forall \rmX \in \sS.
    \label{eq:linear_bounds_forward}
\]
We start from independent nodes. For an independent node $i$, we have $h_i(\rmX) \!=\! \rvx_i$ so we trivially have the bounds $\rmI \rvx_i\! \leq\! h_i(\rmX) \!\leq\! \rmI \rvx_i$. For a dependent node $i$,
we have a \emph{forward LiRPA oracle function} $G_i$ which takes $\underline{\rmW}_j$, $\underline{\rvb}_j$, $\overline{\rmW}_j$, $\overline{\rvb}_j$ for every $j \!\in\! u(i)$ as input and produce new linear bounds for node $i$,
assuming all node $j\in u(i)$ have been bounded:
\begin{align}
        (\underline{\rmW}_i, \underline{\rvb}_i, \overline{\rmW}_i, \overline{\rvb}_i) = 
    G_i ( \{ B_{j} | j \in u(i) \}), 
    \text{where }  B_j := (\underline{\rmW}_j, \underline{\rvb}_j, \overline{\rmW}_j, \overline{\rvb}_j).  
    \label{eq:forward_local}
\end{align}
We defer the discussions on oracle function $G_i$ to a later section. Now, we focus on extending this method on a general graph with known oracle functions in Algorithm~\ref{alg:forward}.
\begin{algorithm}[t!]
    \caption{Forward Mode Bound Propagation on General Computational Graphs}
    \begin{algorithmic}
        \FUNCTION{BoundForward($i$)}
            \FOR{$j \in u(i)$}
                \IF{attributes $\underline{\rmW}_j, \underline{\rvb}_j, \overline{\rmW}_j, \overline{\rvb}_j$ of node $j$ are unavailable}\STATE{BoundForward(j)}
                \ENDIF
            \ENDFOR
            \STATE  $(\underline{\rmW}_i, \underline{\rvb}_i, \overline{\rmW}_i, \overline{\rvb}_i) = 
    G_i ( \{ B_{j} | j \in u(i) \})$
        \ENDFUNCTION
    \end{algorithmic}
    \label{alg:forward}
\end{algorithm}
The forward mode perturbation analysis is straightforward to extend to a general computational graph: for each dependent node $i$, we can obtain its bounds by recursively applying~\eqref{eq:forward_local}.
We check every input node $j$ and compute the bounds of node $j$ if they are unavailable.
We then use $G_i$ to obtain the linear bounds of node $i$. The correctness of this procedure is guaranteed by the property of $G_i$: given $B_j$ as inputs, it always produces valid bounds for node $i$. We analyze its complexity in Appendix \ref{apd:complexity}.

\paragraph{Backward Mode LiRPA on General Computation Graphs} 
For each node $i$, we maintain two attributes: $\underline{\rmA}_i$ and $\overline{\rmA}_i$,
representing 
the coefficients in the linear bounds of $h_o(\rmX)$ w.r.t $h_i(\rmX)$:
\begin{equation}\small
    \sum_{i\in\rmV} \underline{\rmA}_i h_i(\rmX) + \underline{\rvd} 
    \leq h_o(\rmX) 
    \leq 
    \sum_{i\in\rmV} \overline{\rmA}_i h_i(\rmX) + \overline{\rvd} \quad \forall\rmX\in\sS,
    \label{eq:backward_sum}
\end{equation}
where $\underline{\rvd},\overline{\rvd}$ are bias terms that are maintained in our algorithm.
Suppose that the output dimension of node $i$ is $s_i$, then the shape of matrices $\underline{\rmA}_i$ and $\overline{\rmA}_i$ is $s_o\!\times\! s_i$. 
Initially, we trivially have 
\begin{equation}\small
\underline{\rmA}_o=\overline{\rmA}_o=\rmI, \enskip
\underline{\rmA}_i=\overline{\rmA}_i=\vzero(i\neq o),\enskip
\underline{\rvd} =\overline{\rvd}=\vzero,
\label{eq:backward_init}
\end{equation}
which makes \eqref{eq:backward_sum} hold true.
When node $i$ is a dependent node, 
we have a \emph{backward LiRPA oracle function} $F_i$ aiming to compute the lower bound of 
$\underline{\rmA}_ih_i(\rmX)$ and the upper bound of $\overline{\rmA}_ih_i(\rmX)$, and represent the bounds with linear functions of its predecessor nodes $u_1(i),u_2(i),\cdots,u_{m(i)}(i)$:
\newcommand{\newA}{{\mLambda}}
\newcommand{\newd}{{\mDelta}}
\begin{align}\small
(\underline{\newA}_{u_1(i)},\overline{\newA}_{u_1(i)},\underline{\newA}_{u_2(i)},\overline{\newA}_{u_2(i)},\cdots,\underline{\newA}_{u_{m(i)}(i)},\overline{\newA}_{u_{m(i)}(i)},\underline{\newd},\overline{\newd})=
	F_i(\underline{\rmA}_i,\overline{\rmA}_i),\nonumber\\
	\text{s.t.} \quad
	\sum\nolimits_{j\in u(i)} \underline{\newA}_j h_j(\rmX) + \underline{\newd} 
	\leq \underline{\rmA}_ih_i(\rmX),
	\enskip
	\overline{\rmA}_ih_i(\rmX)\leq
	\sum\nolimits_{j\in u(i)} \overline{\newA}_j h_{j}(\rmX) + \overline{\newd}.
	\label{eq:backward_local}
\end{align}
We substitute the $h_i(\rmX)$ terms in \eqref{eq:backward_sum} with the new bounds~\eqref{eq:backward_local}, and thereby these terms are backward propagated to the predecessor nodes and replaced by the $h_{j}(\rmX)(j\in u(i))$ related terms in~\eqref{eq:backward_local}. 
In the end, all such terms are propagated to the independent nodes and $h_o(\rmX)$ will be bounded by linear functions of  independent nodes only, where \eqref{eq:backward_sum} becomes equivalent to \eqref{eq:linear_bounds}.

{
\begin{algorithm}[t!]
    \caption{Backward Mode Bound Propagation on a General Computational Graph}
    \begin{algorithmic}\FUNCTION{BoundBackward($o$)}
            \STATE Create BFS queue $Q$ and $Q.push(o)$
            \STATE $\underline{\rmA}_o\!\leftarrow\! \rmI, \ \ \overline{\rmA}_o\!\leftarrow\!\rmI,\ \ \underline{\rmA}_i\!\leftarrow\!\vzero, \ \ \overline{\rmA}_i\!\leftarrow\!\vzero\ \ (\forall i\neq o),\ \ \underline{\rvd}\!\leftarrow\!\vzero,\ \ \overline{\rvd}\!\leftarrow\!\vzero\ \ $ (Eq. \eqref{eq:backward_init})
            \STATE $\text{GetOutDegree}(o)$ \COMMENT{\textcolor{brown}{$\forall i$ obtain $d_i$, the number of unprocessed output nodes of node $i$ that $o$ depends on.}}
            \WHILE{$Q$ is not empty}
            \STATE $i \leftarrow Q.pop()$ 
            \STATE $(\underline{\newA}_{u_1(i)},\overline{\newA}_{u_1(i)},\underline{\newA}_{u_2(i)},\overline{\newA}_{u_2(i)},\cdots,\underline{\newA}_{u_{m(i)}(i)},\overline{\newA}_{u_{m(i)}(i)},\underline{\newd},\overline{\newd})=
	F_i(\underline{\rmA}_i,\overline{\rmA}_i)$\ \ (Eq. \eqref{eq:backward_local})
            \FOR{$j\in u(i)$}
            {
            \STATE $\underline{\rmA}_{j}+\!\!=\underline{\newA}_j,\ \ \overline{\rmA}_{j}+\!\!=\overline{\newA}_j,\ \ d_j-\!\!=1$
                \IF{$d_j=0$ and node $j$ is a dependent node}
                    \STATE $Q.push(j)$ 
                \ENDIF
            }
            \ENDFOR
            \STATE $\underline{\rvd}+\!\!=\underline{\newd},\ \ \overline{\rvd}+\!\!=\overline{\newd},\ \ \underline{\rmA}_i\!\leftarrow\!\vzero, \ \ \overline{\rmA}_i\!\leftarrow\!\vzero$ \COMMENT{\textcolor{brown}{Clear $\underline{\rmA}_i$ and $\overline{\rmA}_i$ once we propagated through $i$}.}
            \ENDWHILE
            \STATE \textbf{return} $\underline{\rvd}$, $\overline{\rvd}$ \COMMENT{\textcolor{brown}{The algorithm has modified $\underline{\rmA}_i$, $\overline{\rmA}_i$ on the graph.}}
        \ENDFUNCTION
    \end{algorithmic}
    \label{alg:backward}
\end{algorithm}
}

\begin{figure}[t]  
\centering
\vspace{0.2in}
\begin{tabular}{c}
\includegraphics[width=.9\textwidth]{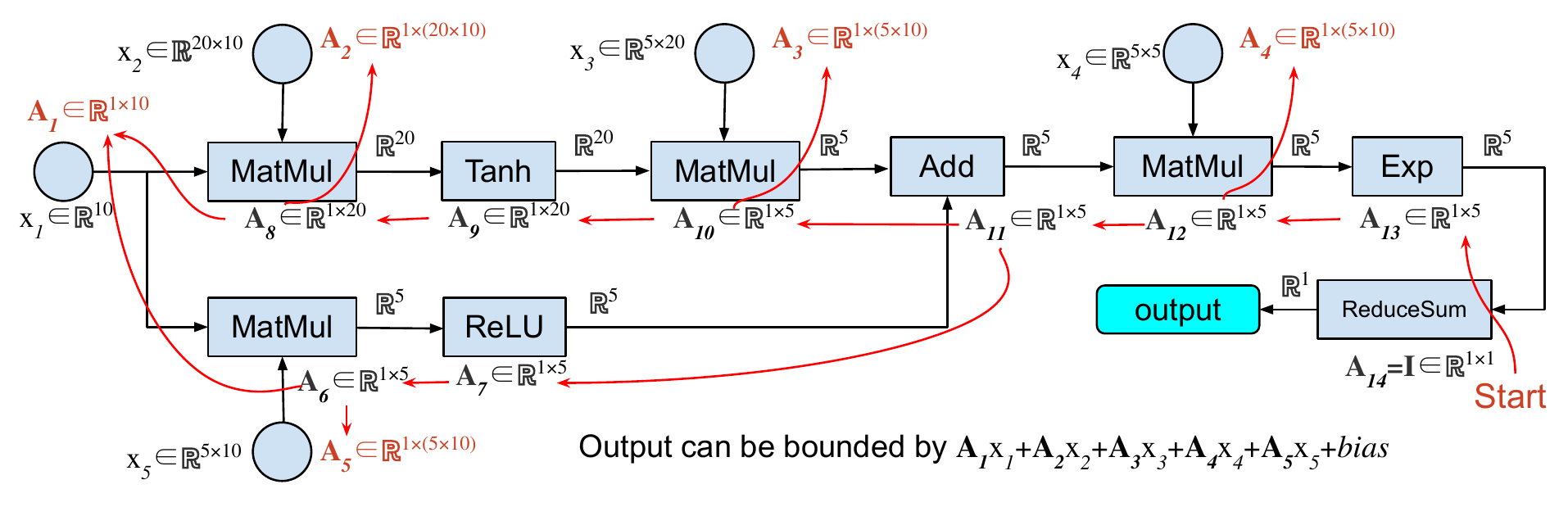}
 \end{tabular}
\caption{Illustration of the backward mode perturbation analysis.
Node $1\sim 5$ are independent nodes and the others are dependent nodes.
Red arrows represent the flow of $\rmA$ matrices including both $\underline{\rmA}$ and $\overline{\rmA}$ that are propagated from the final output node (node 14) to previous nodes.
Finally, only independent nodes retain non-zero $\rmA$ matrices (highlighted in red), and these matrices represent linear bounds w.r.t. independent nodes.}
\label{fig:framework}
\end{figure}

\begin{figure}[t]  
\centering
\begin{tabular}{c}
 \includegraphics[width=.9\textwidth]{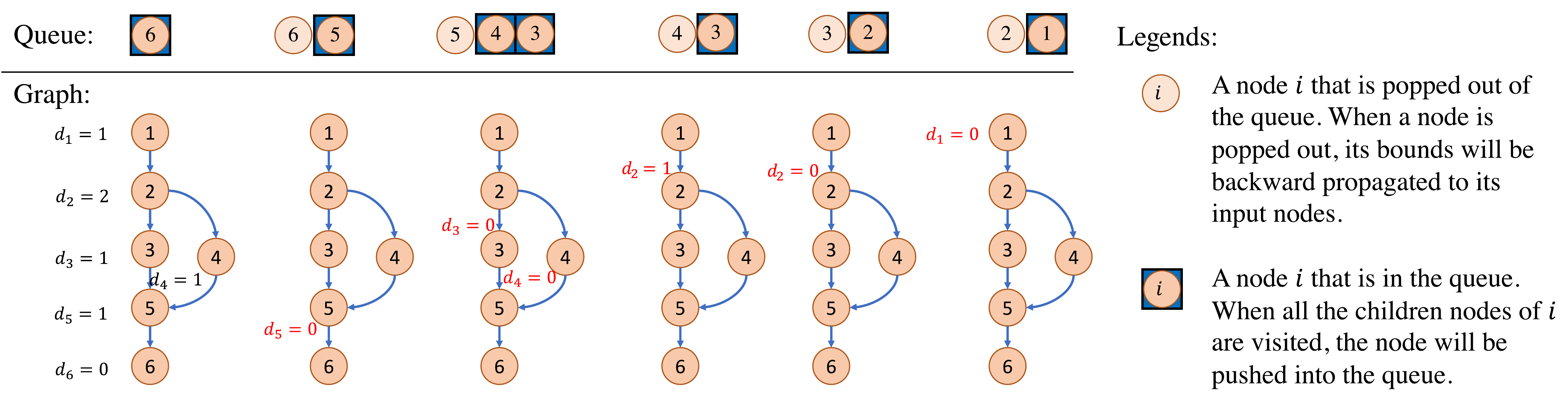}
 \end{tabular}
\caption {Flowchart of the BFS in Algorithm~\ref{alg:backward}. In this example, node 6 is the final output node and $d_i$ is the number of unprocessed output nodes of node $i$ that node $6$ depends on.}
\label{fig:backward}
\end{figure}

We present the full algorithm 
in Algorithm~\ref{alg:backward}.
We let $d_i$ denote the number of unprocessed output nodes of node $i$ that node $o$ depends on, which is initially obtained by a ``GetOutDegree'' function detailed in Appendix~\ref{apd:getdegree}.
We use a BFS for propagating the linear bounds, starting from node $o$ as \eqref{eq:backward_init}.
For each node $i$ picked from the head of the queue, we 
backward propagate $h_i(\rmX)$ using~\eqref{eq:backward_local}.
We update the bound parameters and
decrease all $d_j(j\in u(i))$ by one.
If $d_j\!=\!0$ becomes true for a dependent node $j$, all its related successor nodes have been processed and we push node $j$ to the queue. We repeat this process until the queue is empty.
Figure~\ref{fig:framework} illustrates the flow of backward propagating the bound parameters on an example computational graph, and 
Figure~\ref{fig:backward} illustrates the BFS algorithm.
We show its soundness in Theorem~\ref{theorem:backward} and its proof is given in Appendix \ref{apd:theorm1_proof}.

\begin{theorem}[Soundness of backward mode LiRPA]
When Algorithm~\ref{alg:backward} terminates, we have 
\begin{equation*}\small
    \sum_{i\in\rmV} \underline{\rmA}_i h_i(\rmX) + \underline{\rvd} 
    \leq h_o(\rmX) 
    \leq 
    \sum_{i\in\rmV} \overline{\rmA}_i h_i(\rmX) + \overline{\rvd} \quad \forall\rmX\in\sS,
\end{equation*}
where $\underline{\rmA}_i$, $\overline{\rmA}_i$ are guaranteed to be $\bm{0}$ for all dependent nodes, and thus we obtain provable linear upper and lower bounds of node $o$ w.r.t. all independent nodes.
\label{theorem:backward}
\end{theorem}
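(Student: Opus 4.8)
The proof is a loop-invariant argument for the \textsc{while} loop in Algorithm~\ref{alg:backward}, split into two claims: \textbf{(i)} the displayed chain of inequalities (Eq.~\eqref{eq:backward_sum}) holds before and after every iteration, hence at termination; and \textbf{(ii)} at termination $\underline{\rmA}_i=\overline{\rmA}_i=\bm{0}$ for every dependent node $i$, so the two sums collapse onto the independent nodes and, using $h_i(\rmX)=\rvx_i$ there, take the form of Eq.~\eqref{eq:linear_bounds}.

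\emph{Claim (i).} I would induct on the number of completed iterations. The base case is exactly the initialization~\eqref{eq:backward_init}, for which both sums equal $h_o(\rmX)$. For the inductive step, suppose the invariant holds and the loop pops a dependent node $i$ carrying coefficients $\underline{\rmA}_i,\overline{\rmA}_i$. The backward oracle guarantee~\eqref{eq:backward_local} gives $\sum_{j\in u(i)}\underline{\mLambda}_j h_j(\rmX)+\underline{\mDelta}\le\underline{\rmA}_ih_i(\rmX)$ and $\overline{\rmA}_ih_i(\rmX)\le\sum_{j\in u(i)}\overline{\mLambda}_j h_j(\rmX)+\overline{\mDelta}$ for all $\rmX\in\sS$. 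The iteration's updates $\underline{\rmA}_j+\!\!=\underline{\mLambda}_j$, $\underline{\rvd}+\!\!=\underline{\mDelta}$, $\underline{\rmA}_i\!\leftarrow\!\bm{0}$ therefore replace the single summand $\underline{\rmA}_ih_i(\rmX)$ in the lower sum by the pointwise-smaller quantity $\sum_{j\in u(i)}\underline{\mLambda}_jh_j(\rmX)+\underline{\mDelta}$ (distributed among the existing $\underline{\rmA}_j$ and $\underline{\rvd}$), so the lower sum still lower-bounds $h_o(\rmX)$; the symmetric update on the $\overline{\rmA},\overline{\rvd}$ side preserves the upper bound. Two structural facts keep this clean: the lower coefficients and the upper coefficients never interact in the updates, so the two inequalities are maintained independently; and since $\rmG$ is a DAG we have $i\notin u(i)$, so zeroing $\underline{\rmA}_i$ after it has been read does not clobber any freshly added $\underline{\mLambda}_j$ term.

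\emph{Claim (ii).} Let $D$ be the set of nodes that $o$ depends on, together with $o$. If $i\in D$ then $u(i)\subseteq D$, so the only $\underline{\rmA}_j,\overline{\rmA}_j$ ever modified during the run are those with $j\in D$; a dependent node outside $D$ thus keeps the value $\bm{0}$ from~\eqref{eq:backward_init}. For dependent nodes in $D$ I would prove, by induction along the DAG order from $o$ toward the independent nodes, that \textbf{(a)} each such node is pushed to $Q$ and popped exactly once, and \textbf{(b)} a node $j$ is popped only after all of its successors belonging to $D$ have been popped. Fact (b) is enforced by the counter $d_j$: by correctness of \textsc{GetOutDegree} (Appendix~\ref{apd:getdegree}) it is initialized to the number of successors of $j$ in $D$ (each such successor has positive in-degree, hence is a dependent node, hence by the induction hypothesis is eventually popped and decrements $d_j$ exactly once), and $j$ is pushed precisely when $d_j$ reaches $0$; as $d_j$ receives exactly that many decrements it never goes negative, so $j$ is pushed, and popped, exactly once. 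Now combine: popping $j$ sets $\underline{\rmA}_j,\overline{\rmA}_j\!\leftarrow\!\bm{0}$, and afterward no iteration writes to $\underline{\rmA}_j$ again, since the only nodes whose processing could do so are successors of $j$ in $D$, all already popped by (b). Hence $\underline{\rmA}_j=\overline{\rmA}_j=\bm{0}$ at termination. The output node $o$ has out-degree $0$ in a single-output graph, so it too is popped once, has its $\rmA$-matrices zeroed, and is never revisited. Thus at termination only independent nodes carry nonzero $\rmA$-matrices, and substituting $h_i(\rmX)=\rvx_i$ into the invariant of Claim (i) yields bounds of the form~\eqref{eq:linear_bounds}.

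The main obstacle is the scheduling analysis underlying Claim (ii): one must verify that the $d_i$ counters make the BFS (a) terminate with every dependent node in $D$ processed, and (b) process a node only after every successor that could still contribute to its $\rmA$-matrices, since it is (b) that licenses permanently zeroing $\underline{\rmA}_i,\overline{\rmA}_i$ on the downward sweep. This rests on the correctness of \textsc{GetOutDegree} (deferred to Appendix~\ref{apd:getdegree}) together with a careful topological-order induction over the DAG; by contrast, the invariant maintenance in Claim (i) is a routine substitution once the oracle property~\eqref{eq:backward_local} is granted.
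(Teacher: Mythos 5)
Your proposal is correct and follows essentially the same route as the paper's proof: maintaining the invariant~\eqref{eq:backward_sum} under the oracle substitution~\eqref{eq:backward_local}, plus a scheduling lemma (the paper's Lemma~\ref{lemma:alg2}, proved by induction in topological order) showing each dependent node reachable from $o$ is popped exactly once and only after all nodes depending on it, so the zeroed $\underline{\rmA}_i,\overline{\rmA}_i$ stay zero. Your Claim (ii) is just a mild repackaging of that lemma via the $d_j$ counters and direct successors, so there is no substantive difference.
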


\paragraph{Oracle Functions}

Oracle functions $F_i$ and $G_i$ are defined for each type of operations.\footnote{Note that the oracle functions of some operations also require $\underline{\rvh}_j,\overline{\rvh}_j(j\in u(i))$ for linear relaxation, although we do not explicitly mention them in the algorithm description for simplicity.}
 Previous works~\cite{kolter2017provable,zhang2018efficient,salman2019convex,shi2020robustness} have covered many common operations such as affine transformations, activation functions, matrix multiplication, etc. 
Since the major focus of this paper is on handling general computational graph structures, rather than deriving bounds for these elementary operations, we left the detailed form of these oracle functions in Appendix~\ref{apd:functions}.

Some oracle functions depend on certain graph attributes.
For example, $F_i$ of node $i$ with a nonlinear operation typically requires $\underline{\rvh}_j$, $\overline{\rvh}_j$ for all $j \in u(i)$ (typically referred to as ``pre-activation bounds'' in previous works). We can obtain $\underline{\rvh}_j$, $\overline{\rvh}_j$ by assuming node $j$ as the output node and apply
Algorithm~\ref{alg:backward}, then concretize the linear bounds as will be discussed in Sec~\ref{sec:spec}. However, this can be very expensive because Algorithm~\ref{alg:backward} needs to be applied for every node $j$ wherever $\underline{\rvh}_j$ or $\overline{\rvh}_j$ is required, rather than just the output node. A typically more efficient approach is to obtain $\underline{\rvh}_j$ or $\overline{\rvh}_j$ for all dependent nodes except $o$ using a cheaper method and then apply backward mode LiRPA for node $o$ only. 
This leads to two variants of hybrid approaches, \emph{Forward+Backward} and \emph{IBP+Backward}, where $\underline{\rvh}_j$ and $\overline{\rvh}_j$ are produced by Foward LiRPA or IBP, respectively. 
For certified training, \emph{IBP+Backward}  (generalized from \citet{zhang2019towards}) is the best for efficiency. We discuss the time complexity of these methods in Appendix~\ref{apd:complexity}.

\subsection{General Perturbation Specifications and Bound Concretization}\label{sec:spec}

Once the linear bounds are obtained as~\eqref{eq:linear_bounds}, concrete bounds $\underline{\rvh}_o$ and $\overline{\rvh}_o$ can be found by solving the following optimization problems (this step is referred to as the ``concretization'' of linear bounds):
\begin{equation*}
    \underline{\rvh}_o = \min_{\rmX\in \sS} \underline{\rmW}_o \rmX + \underline{\rvb}_o,\ \ 
    \overline{\rvh}_o = \max_{\rmX\in \sS} \overline{\rmW}_o \rmX + \overline{\rvb}_o.
\end{equation*}
We show two examples: classic $\ell_p$-ball perturbations, and synonym-based word substitution in language tasks.

\paragraph{$\ell_p$-ball Perturbations}
In this setting, assuming that $\rmX_0$ is the clean input, the input space is defined by  
 $   \sS = \{ \rmX \mid \parallel \rmX - \rmX_0 \parallel_p \leq \epsilon \}$,
which means that the actual input $\rmX$ is perturbed within an $\ell_p$-ball centered at $\rmX_0$ with a radius of $\epsilon$.
Linear bounds can be concretized as~\citet{zhang2018efficient}:
\begin{align*}
    &\underline{\rvh}_o
    =
     -\epsilon \!\parallel\! \underline{\rmW}_o \!\parallel\!_q \!+\! \underline{\rmW}_o \rmX_0 \!+\! \underline{\rvb}_o,\quad 
    \overline{\rvh}_o
    =
    \epsilon \!\parallel\! \overline{\rmW}_o \!\parallel\!_q + \overline{\rmW}_o \rmX_0 \!+\! \overline{\rvb}_o,
    \quad 1/p+1/q=1,
\end{align*}
where $\parallel\cdot\parallel_q$ denotes taking $\ell_q$-norm for each row in the matrix and the result makes up a vector.

\paragraph{Synonym-based Word Substitution}

Beyond $\ell_p$-ball perturbations, we show an example of a perturbation specification defined by synonym-based word substitution in language tasks.
Let the clean input to the model be a sequence of words $w_1, w_2, \cdots, w_l$ mapped to embeddings $e(w_1), e(w_2), \cdots, e(w_l)$.
Following a common adversarial perturbation setting in NLP~\cite{huang2019achieving,jia2019certified}, we allow at most $\delta$ words to be replaced and each word $w_i$ can be replaced by words within its pre-defined substitution set $\sS(w_i)$. $\sS(w_i)$ is constructed from the synonyms of $w_i$ and validated with a language model.
We denote each actual input word as $\hat{w}_i\in \{ w_i \} \cup \sS(w_i)$, and we show that the linear bounds of node $k$ can be concretized with dynamic programming (DP) in Theorem~\ref{theorem:dp} as proved in Appendix~\ref{apd:theorem2_proof}.

\begin{theorem}
Let $\underline{\Tilde{\rmW}}_t$ 
be columns in $\underline{\rmW}_o$ 
that correspond to the coefficients of $e(\hat{w}_t)$ in the linear bounds. 
The lower bound of
$
\underline{\rvb}_o + \sum\nolimits_{t=1}^i \underline{\Tilde{\rmW}}_t e(\hat{w}_t),
$
when $j$ words among $\hat{w}_1,  \dots, \hat{w}_i$ have been replaced, denoted as $\underline{\rvg}_{i,j}
$, can be computed by:
\\\resizebox{\linewidth}{!}{
  \begin{minipage}{\linewidth}
\begin{equation*}
    \underline{\rvg}_{i,j} =
    	\min(
    		\underline{\rvg}_{i-1,j} + \underline{\Tilde{\rmW}}_i e(w_i),\ \ 
    \underline{\rvg}_{i-1,j-1} \!+\! \min_{w'}\{\underline{\Tilde{\rmW}}_i e(w') \} 
    	) \ \  (i,j>0)\quad
    	\text{s.t.} \ \  w' \in \sS(w_i),
\end{equation*}
 \end{minipage}
}\\
and $\underline{\rvg}_{i,0} \!=\! \underline{\rvb}_o \!+\! \sum_{t=1}^i \underline{\Tilde{\rmW}}_t e(w_t)$. 
The concrete lower bound is $\min_{j=0}^\delta\underline{\rvg}_{n,j}$.
The upper bound can also be computed similarly by taking the maximum instead of the minimum in the above DP computation.
\label{theorem:dp}
\end{theorem}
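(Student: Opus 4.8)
The plan is to establish the recurrence and base case for $\underline{\rvg}_{i,j}$ first, and then read off the concretization formula $\min_{j=0}^{\delta}\underline{\rvg}_{n,j}$ directly from the definition of bound concretization in Section~\ref{sec:spec}. Throughout, I treat $\underline{\rvg}_{i,j}$ as the element-wise minimum of the vector $\underline{\rvb}_o + \sum_{t=1}^{i}\underline{\Tilde{\rmW}}_t e(\hat{w}_t)$ over all choices $\hat{w}_t\in\{w_t\}\cup\sS(w_t)$ in which exactly $j$ of the first $i$ words are replaced, adopting the convention that the minimum over an empty feasible set (which happens only when $j>i$) is $+\infty$ in every coordinate; with this convention the recurrence is well posed. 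Equivalently one may run $s_o$ scalar dynamic programs in parallel, one per output coordinate, and it suffices to argue correctness for a single coordinate.

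The correctness of the recurrence rests on two structural observations. First, the objective is additive across positions: $\underline{\rvb}_o + \sum_{t=1}^{i}\underline{\Tilde{\rmW}}_t e(\hat{w}_t) = \left(\underline{\rvb}_o + \sum_{t=1}^{i-1}\underline{\Tilde{\rmW}}_t e(\hat{w}_t)\right) + \underline{\Tilde{\rmW}}_i e(\hat{w}_i)$. Second, the feasible domain is essentially a product set cut by a single global count constraint, because the admissible values of $\hat{w}_i$, namely $\{w_i\}\cup\sS(w_i)$, depend on $w_i$ alone and not on the other chosen words. I would then induct on $i$. The base cases are $i=0$ (empty sum, so $\underline{\rvg}_{0,0}=\underline{\rvb}_o$ and $\underline{\rvg}_{0,j}=+\infty$ for $j>0$) and $j=0$ (no replacement allowed, forcing $\hat{w}_t=w_t$, which gives the stated $\underline{\rvg}_{i,0}=\underline{\rvb}_o+\sum_{t=1}^{i}\underline{\Tilde{\rmW}}_t e(w_t)$). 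For $i,j>0$, split the feasible assignments according to the last word: if $\hat{w}_i=w_i$, then exactly $j$ replacements occur among the first $i-1$ words and the last summand is the constant $\underline{\Tilde{\rmW}}_i e(w_i)$, so the minimum over this branch is $\underline{\rvg}_{i-1,j}+\underline{\Tilde{\rmW}}_i e(w_i)$; if $\hat{w}_i=w'\in\sS(w_i)$, then exactly $j-1$ replacements remain for the first $i-1$ words, and since $w'$ ranges over $\sS(w_i)$ independently of the prefix and enters the objective only through the separate summand $\underline{\Tilde{\rmW}}_i e(w')$, the minimum over this branch factors coordinate by coordinate (the minimum over a product domain of a sum of coordinate-separable functions equals the sum of the per-factor minima) into $\underline{\rvg}_{i-1,j-1}+\min_{w'\in\sS(w_i)}\underline{\Tilde{\rmW}}_i e(w')$. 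The element-wise minimum of the two branches is exactly the claimed recurrence.

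For the concretization, note that in this perturbation specification the perturbed nodes are precisely the word embeddings, so $\underline{\rmW}_o\rmX = \sum_{t=1}^{n}\underline{\Tilde{\rmW}}_t e(\hat{w}_t)$ and $\sS$ is the set of embedding sequences obtained by replacing at most $\delta$ words. Hence $\underline{\rvh}_o = \min_{\rmX\in\sS}\underline{\rmW}_o\rmX + \underline{\rvb}_o$ is the minimum of the same objective over all assignments with at most $\delta$ replacements, and partitioning that feasible set by the exact number $j\in\{0,\dots,\delta\}$ of replaced words yields $\min_{j=0}^{\delta}\underline{\rvg}_{n,j}$; this is a genuine lower bound of $h_o(\rmX)$ on $\sS$ because $h_o(\rmX)\ge\underline{\rmW}_o\rmX+\underline{\rvb}_o$ for all $\rmX\in\sS$ by~\eqref{eq:linear_bounds}. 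The upper-bound claim is the exact mirror image: replace $\underline{\rmW}_o,\underline{\rvb}_o,\underline{\Tilde{\rmW}}_t,\underline{\rvg}$ by $\overline{\rmW}_o,\overline{\rvb}_o,\overline{\Tilde{\rmW}}_t,\overline{\rvg}$, replace every $\min$ by $\max$, and flip the inequalities.

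I do not expect a deep obstacle; this is a standard ``prefix DP'' correctness argument and the remaining difficulties are bookkeeping. The points that need care are: keeping the bounds vector-valued so that every $\min$ is interpreted element-wise and the factorization of the minimum over the product domain is applied coordinate by coordinate; handling the infeasible states $j>i$ cleanly via the $+\infty$ convention so the recurrence is well defined; and making explicit the hypothesis, implicit in the statement, that each substitution set $\sS(w_i)$ depends only on $w_i$. If the admissible substitutions at one position could depend on the choices at other positions, the feasible set would no longer factor and the positionwise decomposition underlying the recurrence would fail, so this is the precise condition that makes the dynamic program exact.
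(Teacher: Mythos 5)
Your proposal is correct and follows essentially the same route as the paper's proof: define $\underline{\rvg}_{i,j}$ as the optimum over prefixes with exactly $j$ replacements, establish the base cases, split on whether the last word $\hat{w}_i$ is kept or replaced to get the two branches of the recurrence, and recover the concrete bound by partitioning the at-most-$\delta$ feasible set into exact counts $j=0,\dots,\delta$. Your extra bookkeeping (element-wise interpretation, the $+\infty$ convention for $j>i$, and the explicit remark that $\sS(w_i)$ must depend only on $w_i$) is a welcome tightening of details the paper leaves implicit, but it does not change the argument.
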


\subsection{\emph{Loss Fusion} for Scalable Training of Certifiably Robust Neural Networks}
\label{sec:training}

The optimization problem of robust training can be formulated as minimizing the robust loss:
{\begin{equation}
\min_\theta \sum_{\rmX_0, y} \max_{\rmX\in\sS} L(
f_\theta(\rmX),y), 
\end{equation}}
where $f_\theta(\rmX)$ is the network output at the logit layer, and $y$ is the ground truth.
Let
$g_\theta(\rmX,y)=(\rve_y \vone^\top-\rmI) f_\theta (\rmX)$ be the margin between the ground truth label and all the classes (similarly defined in~\citet{wong2018provable,zhang2019towards}). 
In previous works, the cross-entropy loss is upper bounded by lower bounds on margins, as a consequence of Theorem 2 in~\citet{wong2018provable}: $\max\nolimits_{\rmX\in\sS} {L}(f_\theta (\rmX), y) \leq L(\underline{g}_\theta(\rmX, y),y)$ where $\underline{g}_\theta(\rmX, y) \leq \min\nolimits_{\rmX\in\sS} {g}_\theta(\rmX, y)$. This requires us to first lower bound $g_\theta(\rmX, y)$ using LiRPA. The most efficient LiRPA approach~\citep{zhang2019towards} used IBP+backward to obtain this bound, requiring $O(Kr)$ time where $K$ is the output (logit) layer size (or number of labels), 
and $O(r)$ is the time complexity of a regular  propagation without computing bounds
(see Appendix~\ref{apd:complexity}). This cannot scale to large datasets when $K$ is large (e.g. in Tiny ImageNet $K=200$; in ImageNet $K=1000$).

We propose a new technique, \emph{loss fusion}, which computes an upper bound of $L(f_\theta (\rmX), y)$ directly without $\underline{g}_\theta(\rmX, y)$ as a surrogate. This is possible by treating $L$ as the output node of the computational graph. When $L$ is the cross entropy loss, we have $L(g_\theta(\rmX),y)=\log S(\rmX,y)$, where
$S(\rmX,y)=\sum_{i\leq K} \exp([-g_\theta(\rmX,y)]_i)$. We can thus compute a LiRPA lower bound for $S(\rmX,y)$ directly.
This is a novel method that has not appeared in previous works and it yields two benefits.
First, this reduces the time complexity of upper bounding $L(f_\theta(\rmX),y)$ to $O(r)$, as now the output layer size has been reduced from $K$ to 1. This is the first time in the literature that a tight LiRPA based bound can be computed in the \emph{same asymptotic complexity as forward propagation} and IBP. 
Second, we show that this is not only faster, but also produces tighter bounds in certain cases:

\begin{theorem}
Given same concrete lower and upper bounds of $g_\theta(\rmX,y)$ as $\underline{g}_\theta(\rmX,y)$  and $\overline{g}_\theta(\rmX,y)$ which may be used in linear relaxation, for $S(\rmX,y)\!=\!\sum_{i\leq K} \exp([-g_\theta(\rmX,y)]_i)$, we have
\begin{equation}
\small{
\max_{\rmX\in\sS} L(f_\theta(\rmX), y)
\leq \log\underline{S}(\rmX,y)
\leq L(-\underline{g}_\theta(\rmX,y),y),
}
\end{equation}	
where $L$ is the cross-entropy loss, $\underline{S}(\rmX,y)$ is the lower bound of $S(\rmX,y)$ by backward mode LiRPA.
\label{theorem:loss_fusion}
\end{theorem}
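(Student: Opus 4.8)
\textbf{Proof proposal for Theorem~\ref{theorem:loss_fusion}.}

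The plan is to establish the two inequalities separately. The first inequality, $\max_{\rmX\in\sS} L(f_\theta(\rmX),y) \leq \log\underline{S}(\rmX,y)$, follows almost immediately from the definitions: since $L(f_\theta(\rmX),y) = \log S(\rmX,y)$ and $\log$ is monotonically increasing, and since $\underline{S}(\rmX,y)$ is by construction a valid LiRPA lower bound on $S(\rmX,y)$ over $\sS$, we need to be slightly careful about the direction. Actually the chain should read: $L(f_\theta(\rmX),y) = \log S(\rmX,y)$, so $\max_{\rmX\in\sS} L = \log(\max_{\rmX} S)$, and we need $\max_\rmX S(\rmX,y)$ bounded — so the relevant LiRPA bound is an \emph{upper} bound on $S$; I would restate this piece using $\overline{S}$ in the first step, or note that in the notation of the theorem $\underline{S}$ refers to the bound computed for the purpose of bounding $L$ from above, i.e. an upper bound on $S$. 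First I would pin down precisely which concretized LiRPA bound on $S$ is being invoked (the backward-mode bound from Theorem~\ref{theorem:backward} applied with $S$ as output node, then concretized as in Section~\ref{sec:spec}), and cite its soundness; the first inequality is then just monotonicity of $\log$.

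The substance is the second inequality, $\log \underline{S}(\rmX,y) \leq L(-\underline{g}_\theta(\rmX,y),y)$, which says loss fusion is \emph{no looser} than the two-step approach. The key is that both sides are built from the \emph{same} concrete pre-activation bounds $\underline{g}_\theta(\rmX,y), \overline{g}_\theta(\rmX,y)$ used in the linear relaxation of the nonlinearities. On the right-hand side, $L(-\underline{g}_\theta(\rmX,y),y) = \log\sum_{i\leq K}\exp([\underline{g}_\theta(\rmX,y)]_i)$ (by definition of cross-entropy composed with the margin, matching the form $S$), which is exactly $\log$ of the value obtained by taking, coordinate-wise, the \emph{interval} upper bound $\exp([\underline{g}]_i)$ of each $\exp([-g]_i)$ and summing — i.e. it is what IBP-style concretization of $S$ gives after the first stage. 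On the left-hand side, $\underline{S}$ is obtained by backward-mode LiRPA treating $S$ as the output: the $\exp$ nodes and the summation node get linear (not interval) relaxations, and the resulting linear bound on $S$ is concretized over $\sS$. So the comparison reduces to: a linear-relaxation bound on the composite function $\rmX \mapsto \sum_i \exp([-g_\theta(\rmX)]_i)$ is at least as tight as the bound obtained by first interval-bounding each $\exp([-g_\theta(\rmX)]_i)$ from its pre-activation interval and then summing. I would prove this by exhibiting that the linear relaxation used inside backward mode, when ``relaxed further'' to be axis-aligned/constant, recovers the interval bound on each summand; since backward mode keeps the tighter linear terms and only loosens to a constant at the final concretization step against $\sS$ (which is no worse than the summand-wise constant), the inequality follows. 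Concretely: each $\exp$ is relaxed by a linear upper bound over $[\underline{g}_i, \overline{g}_i]$; the tangent/secant line at worst, when its linear part is dropped and replaced by its max over the interval, equals $\exp(\overline{(-g)_i}) = \exp(\underline{g}_i)$, the interval bound; summing linear upper bounds and then maximizing over $\sS$ is $\leq$ summing the interval maxima, because the linear coefficients may partially cancel or be jointly minimized.

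The main obstacle I anticipate is handling the concretization step carefully: on the left-hand side the linear upper bound on $S$ is a single affine function of $\rmX$ that is maximized over the whole set $\sS$ \emph{jointly}, whereas on the right-hand side each summand's pre-activation bound was obtained by maximizing a linear function of $\rmX$ over $\sS$ \emph{separately}. The inequality $\max_\rmX \sum_i \ell_i(\rmX) \leq \sum_i \max_\rmX \ell_i(\rmX)$ (subadditivity of $\max$) is exactly what gives loss fusion its edge, and the crux is to show the left linear bound $\sum_i \ell_i(\rmX)$ is pointwise $\leq$ the sum of the individual $\exp$-relaxation upper bounds, so that after maximizing one gets $\log\underline{S} \leq \log \sum_i \max_\rmX \ell_i \leq \log\sum_i\exp(\underline{g}_i) = L(-\underline{g}_\theta,y)$. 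Making ``the left linear bound is the backward-propagated sum of the per-$\exp$ linear relaxations'' precise requires unwinding one step of Algorithm~\ref{alg:backward} at the summation node (which distributes $\underline{\rmA}$ equally to each $\exp$ child) and then one step at each $\exp$ node (which applies the linear upper relaxation of $\exp$); after that the argument is the subadditivity observation plus monotonicity of $\log$. I would also need the elementary fact that the chosen linear upper relaxation of $\exp$ on $[\underline{g}_i,\overline{g}_i]$ never exceeds the constant $\exp(\overline{(-g)_i})$ on that interval, which is immediate since $\exp$ is increasing and the relaxation is an upper bound that is tight only at an endpoint. Assembling these pieces yields the claim.
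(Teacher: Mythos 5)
Your proposal follows essentially the same route as the paper's proof: rewrite $L(f_\theta(\rmX),y)=\log S(\rmX,y)$ and get the first inequality from monotonicity of $\log$ plus soundness of the concretized backward LiRPA bound on $S$ (you are right that the bound needed is an \emph{upper} bound on $S$; the appendix proof indeed restates the theorem with $\overline{S}$), and for the second inequality compare two relaxations of $\exp$ under the same concrete bounds of $g_\theta$: the right-hand side is exactly what one gets by replacing each $\exp([-g_\theta(\rmX,y)]_i)$ with its constant interval upper bound, while loss fusion uses a secant upper relaxation that never exceeds that constant on the interval. Two cautions. First, a recurring sign slip: the interval upper bound of $\exp([-g]_i)$ is $\exp(-[\underline{g}]_i)$, not $\exp([\underline{g}]_i)$, and $\exp(\overline{(-g)}_i)=\exp(-\underline{g}_i)$. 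Second, your crux step $\max_{\rmX\in\sS}\ell_i(\rmX)\leq\exp(-\underline{g}_i)$ is not quite ``immediate'' from the endpoint property of the secant, because $\ell_i$ is the secant \emph{further relaxed} through the layers below the margin and then concretized over $\sS$; what is additionally needed --- and what the paper also leaves implicit, by noting that the IBP-style side has zero linear coefficients (so it is exactly the constant $\sum_i\exp(-\underline{g}_i)$) while all other relaxations are identical --- is that backward propagation below $g_\theta$, concretized at the input, is no looser than maximizing directly over the given box $[\underline{g},\overline{g}]$ (this is where the ``same concrete bounds'' assumption, and e.g.\ the $\underline{\alpha}=0$ ReLU choice mentioned in Appendix~\ref{apd:functions}, enters; the paper's Remark acknowledges the caveat). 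With that assumption made explicit, your argument is at the same level of rigor as, and structurally identical to, the paper's.
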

\vspace{-0.1in}
This theorem is proved in Appendix~\ref{apd:theorem3_proof}.
Intuitively, the original approach of propagating $\underline{g}_\theta(\rmX, y)$ through the cross-entropy loss is similar to using IBP for bounding the loss function, but in \emph{loss fusion} we treat the loss function as part of the computational graph and apply LiRPA bounds to it directly; it produces tighter bounds as we can use a tighter relaxation for the nonlinear function $S(\rmX,y)$.

\section{Experiments}
\label{sec:exp}

\begin{table*}[htb]
\caption{Error rates of different certifiably trained models on CIFAR-10 and Tiny-ImageNet datasets (results on downscaled ImageNet are in Table~\ref{tab:DSImageNet_error_rate}). ``Standard'', `PGD'' and ``verified'' rows report the standard test error, test error under PGD attack, and verified test error, respectively.}
\adjustbox{max width=0.56\textwidth}{
 \begin{minipage}{\textwidth}
\begin{threeparttable}
\centering
\begin{tabular}{c|c|cb|cb|cb|cb|c|c|c}\toprule
\multirow{2}{*}{Dataset} & \multirow{2}{*}{Error} & \multicolumn{2}{c|}{CNN-7+BN} & \multicolumn{2}{c|}{DenseNet} & \multicolumn{2}{c|}{WideResNet} & \multicolumn{2}{c|}{ResNeXt} & \multicolumn{3}{c}{Literature results} \\
 & &IBP & Ours & IBP & Ours &IBP & Ours & IBP & Ours& CROWN-IBP\cite{zhang2019towards} & IBP\cite{zhang2019towards}\tnote{a} & \citet{balunovic2020adversarial} \\\hline 
\multirow{3}{*}{\shortstack{CIFAR-10\\$\epsilon=\frac{8}{255}$}}& Standard & 57.95\% & \textbf{53.71\%} & 57.21\% &56.03\%  & 58.07\% & 53.89\%  & 56.32\% & 53.85\% & 54.02\%  & 58.43\% & 48.3\% \\
& PGD & 67.10\% & \textbf{64.31\%} & 67.75\% & 65.09\% & 67.23\% & 64.42\% & 67.55\%  &64.16\% & 65.42\% & 68.73\% & -\\
& Verified & 69.56\% & \textbf{66.62\%} & 69.59\% &67.57\% &  70.04\% &67.77\% & 70.41\% & 68.25\% & 66.94\% & 70.81\% & 72.5\% \\\hline

\multirow{3}{*}{\shortstack{Tiny-ImageNet\\$\epsilon=\frac{1}{255}$}}& Standard &  78.54\% & 78.42\% & 78.40\% & 77.96\% & 73.54\% & \textbf{72.18\%} & 78.94\% & 78.58\% & \multicolumn{3}{c}{\multirow{3}{*}{\shortstack{None. \citep{gowal2018effectiveness} reported a IBP model trained on \\$64\times64$ downscaled Imagenet dataset with\\ 84.04\% clean error and  93.87\% verified error.}}}  \\
& PGD & 81.05\% & 80.96\% & 80.32\% & 80.52\% & 79.40\% & \textbf{79.48\%} & 80.17\% & 79.80\%  \\
& Verified & 87.96\% & 87.31\% & 86.87\% & 85.44\% & 85.15\% & \textbf{84.14\%} & 87.70\% & 86.95\%  \\

\bottomrule
\end{tabular}
  \begin{tablenotes}
  \item[a] \citet{gowal2018effectiveness} reported better IBP verified error (68.44\%) but this result was found not easily reproducible~\citep{zhang2019towards,balunovic2020adversarial}
  \end{tablenotes}
  \end{threeparttable}
  \end{minipage}
  }

\label{table:vision_error_rate}
\vspace{-2mm}
\end{table*}

\begin{table}[t!]
\centering
\caption{Per-epoch training time and memory usage of the 4 large models on CIFAR-10 with batch size 256, and 3 large models on Tiny ImageNet with batch size 100. ``LF''=loss fusion; ``OOM''=out of memory. Numbers in parentheses are multiples of natural training time or memory usage. With loss fusion, LiRPA based bounds are only 3 to 5 times slower than natural training even on datasets with many labels. Without loss fusion (e.g., in~\citep{zhang2019towards}) LiRPA cannot scale to the TinyImageNet dataset. }
\footnotesize
  \adjustbox{max width=1.\textwidth}{
\begin{tabular}{c|c|cccc|cccc}
\toprule[1pt]
\multirow{2}{*}{Dataset}& \multirow{2}{*}{Training method}& \multicolumn{4}{c|}{Wall clock time (second)}  & \multicolumn{4}{c}{GPU Memory Usage (GB)}            \\
          \cline{3-10}
          && Natural & IBP & LiRPA w/o LF & LiRPA w/ LF &  Natural & IBP & LiRPA w/o LF & LiRPA w/ LF  \\
          \hline
\multirow{4}{*}{CIFAR-10}&CNN-7+BN &11.89&22.23 (1.87$\times$)&56.05 (4.71$\times$)&33.40 (2.81$\times$)&4.42&7.06 (1.60$\times$)&20.52 (4.64$\times$)&10.34 (2.34$\times$)\\
&DenseNet &22.07&54.40 (2.46$\times$)&OOM&90.79 (4.11$\times$)&6.58&16.78 (2.55$\times$)&OOM&27.50 (4.18$\times$)\\
&WideResNet&19.39&43.65 (2.55$\times$)&OOM&74.78 (3.85$\times$)&7.18&13.50 (1.88$\times$)&OOM&21.98 (3.06$\times$)\\
&ResNeXt &14.78&32.44 (2.20$\times$)&132.70 (8.98$\times$)&55.84 (3.78$\times$)&4.74&11.34 (2.39$\times$)&43.68 (9.21$\times$)&18.58 (3.92$\times$)\\
\midrule
\multirow{4}{*}{Tiny-ImageNet}&CNN-7+BN & 56.70 & 112.09 (1.98$\times$) & OOM & 163.29 (2.88$\times$)  & 4.22 & 7.12 (1.69$\times$) & OOM & 10.57 (2.50$\times$) \\
&DenseNet & 135.17 & 318.77 (2.36$\times$) & OOM & 513.96 (3.80$\times$) & 8.55 & 20.55 (2.4$\times$) & OOM & 34.81 (4.07$\times$) \\
&WideResNet& 133.11  & 407.74 (3.06$\times$) & OOM & 635.50 (4.77$\times$) & 10.91  & 24.05 (2.20$\times$) & OOM & 39.08 (3.58$\times$) \\
&ResNeXt&  92.63 &  191.34 (2.07$\times$) & OOM & 337.83 (3.65$\times$) &  4.31 & 7.05  (1.64$\times$) & OOM & 11.66 (2.69$\times$) \\

\bottomrule[1pt]
\end{tabular}
}
\label{table:cost_cifar}
\end{table}

\begin{wraptable}{r}{0.5\textwidth}
\centering
\caption{\small{Certified defense on Downscaled ImageNet dataset. We use WideResNet in this experiment.}}
\label{tab:DSImageNet_error_rate}
\resizebox{0.5\textwidth}{!}{
\begin{tabular}{c|c|ccc}
\toprule
Dataset & Method & Clean&  PGD &Verified \\
 \hline
\multirow{2}{*}{\shortstack{ImageNet ($64\times 64$)\\$\epsilon=\frac{1}{255}$}} & IBP~\citep{gowal2018effectiveness}   & 84.04\% & 90.88\%  & 93.87\% \\ 
& Ours   & \textbf{83.77}\% & \textbf{89.74}\% & \textbf{91.27}\% \\

\bottomrule
\end{tabular}
}
\end{wraptable}

\paragraph{Robust Training of Large-scale Vision Models}
\label{sec:vision_exp}
Our \emph{loss fusion} technique allows us to scale to Tiny-ImageNet~\citep{le2015tiny} and downscaled ImageNet~\citep{chrabaszcz2017downsampled}; to the best of our knowledge, this is the first LiRPA based certified defense on Tiny-ImageNet and downscaled ImageNet with a large number of class labels (200 and 1000, respectively).
Besides, the automatic LiRPA bounds allow us to train certifiably robust models on complicated network architectures (WideResNet~\citep{zagoruyko2016wide}, DenseNet~\citep{huang2017densely} and ResNeXt~\citep{xie2017aggregated}) and achieve state-of-the-art results, where previous works use simpler models~\citep{wong2018scaling,mirman2018differentiable,wang2018mixtrain,zhang2019towards} due to implementation difficulty. We extend CROWN-IBP \citep{zhang2019towards} to the general IBP+backward approach: we use IBP to compute bounds of intermediate nodes of graph and use tight backward mode LiRPA for the bounds of the last layer. Unlike in CROWN-IBP, we apply loss fusion to avoid the time complexity dependency on the number of class labels, and we train a few state-of-the-art classification models (\citep{zhang2019towards} used a simple CNN feedforward network). 
We compare our results to IBP training~\cite{gowal2018effectiveness}. We provide detailed hyperparameters in Appendix~\ref{sec:app_vision_training}. We report results on 
CIFAR-10~\citep{krizhevsky2009learning} with $\ell_\infty$ perturbation $\epsilon\!=\!8/255$ and Tiny-ImageNet with $\epsilon\!=\!1/255$ in Table~\ref{table:vision_error_rate}, and  Downscaled-ImageNet~\cite{chrabaszcz2017downsampled} which has $1,000$ class labels with $\ell_\infty$ perturbation $\epsilon\!=\!1/255$ in Table~\ref{tab:DSImageNet_error_rate}. 
We find that in all settings, our tight LiRPA bounds improve both clean and verified errors compared to IBP. Additionally, we achieve \emph{state-of-the-art verified error} of $66.62\%$ on CIFAR-10 with $\epsilon\!=\!8/255$, better than latest published works~\citep{gowal2018effectiveness,zhang2019towards,balunovic2020adversarial} in certified defense.

In Table~\ref{table:cost_cifar}, we report wall clock time and GPU memory usage for regular training, pure IBP training, LiRPA training on logit layer without loss fusion (same as~\citep{zhang2019towards}) and LiRPA training with loss fusion. We use the same batch size 256 for all settings and conduct the experiments on 4 Nvidia GTX 1080Ti GPUs. 
With loss fusion, LiRPA is efficient and only 3-4 times slower than natural training on both CIFAR-10 and Tiny ImageNet. With loss fusion, we can enable LiRPA at a cost similar to IBP, allowing us to use much tighter bounds and obtain better-verified errors than IBP (Table~\ref{table:vision_error_rate}).
The computational cost is significantly better than~\citep{zhang2019towards} which is up to 10 (number of labels) times slower than natural training on CIFAR-10, and impossible to scale to Tiny ImageNet with 200 labels or downscaled ImageNet with 1000 labels. We also report an additional comparison where we use the largest possible batch size rather than a fixed batch size in each setting in Appendix~\ref{sec:app_vision_training}.

\begin{table}[t!]
  \centering
    \caption{
  Verification and certified defense for LSTM and Transformer based NLP models. $\delta_\text{train}$ and $\delta$ represent the number of perturbed synonym words during training and evaluation. 
  For the most important setting $\delta_\textrm{train}\!=\!6$, we run training with 5 different seeds and report the mean and standard deviation. 
  $\delta_\mathrm{train}\!=\!0$ stands for natural training (no robust objective); $\delta=0$ stands for evaluating clean (standard) test accuracy. 
  ``IBP+Backward (alt.)'' on $\delta_\mathrm{train}\!=\!1$ has an alternative training schedule focusing on the small $\delta$ (see Appendix~\ref{apd:SST_training}). 
   }
  \footnotesize
  \adjustbox{max width=.95\textwidth}{
  \begin{tabular}{c|cc|ccccccc}
    \toprule[1pt]
    \multirow{2}{*}{Model} & 
    \multicolumn{2}{c|}{Training} &\multicolumn{6}{c}{Verified Test Accuracy (\%)}\\
    & Budget & Method & $\delta=0$ & $\delta=1$ & $\delta=2$ & $\delta=3$ & $\delta=4$ & $\delta=5$ & $\delta=6$ \\
	\hline

	\multirow{8}{*}{LSTM} & \multirow{3}{*}{$\delta_\textrm{train}=0$} & IBP & 84.9 & 0.6 & 0.6 & 0.6 & 0.6 & 0.6 & 0.6\\
	& & Forward & 84.9 & 0 & 0 & 0 & 0 & 0 & 0 \\
	& & Forward+Backward & 84.9 & 0 & 0 & 0 & 0 & 0 & 0\\
	
	\cline{2-10}
		
	& \multirow{3}{*}{$\delta_\textrm{train}=1$} & IBP & 81.3 & 78.2 & 78.2 & 78.2 & 78.2 & 78.2 & 78.2\\
	& & IBP+Backward (alt.) & 81.7 & 77.3 & 75.2 & 73.8 & 72.7 & 72.3 & 72.0\\
	& & IBP+Backward & 81.3 & 79.0 & 78.6 & 78.6 & 78.6 & 78.6 & 78.6\\	
	\cline{2-10}
	& \multirow{2}{*}{$\delta_\textrm{train}=6$} & IBP & 79.8$\pm$1.09  & 76.2$\pm$1.67  & 76.2$\pm$1.67  & 76.2$\pm$1.67  & 76.2$\pm$1.67  & 76.2$\pm$1.67  & 76.2$\pm$1.67 \\
	& & IBP+Backward & 79.4$\pm$1.47  & 76.6$\pm$1.42  & 76.6$\pm$1.42  & 76.6$\pm$1.42  & 76.6$\pm$1.42  & 76.6$\pm$1.42  & 76.6$\pm$1.42 \\
	
	\hline	
	
	\multirow{8}{*}{Transformer} & \multirow{3}{*}{$\delta_\textrm{train}=0$} & IBP & 82.0 & 0.6 & 0.6 & 0.6 & 0.6 & 0.6 & 0.6\\
	& & Forward & 82.0 & 60.6 & 47.1 & 40.5 & 36.8 & 35.6 & 35.0\\
	& & Forward+Backward & 82.0 & 65.0 & 51.2 & 44.5 & 41.3 & 39.2 & 38.7\\
	
	\cline{2-10}
	
	& \multirow{3}{*}{$\delta_\textrm{train}=1$} & IBP & 78.7 &  76.9 &  76.9 & 76.9&  76.9&  76.9&  76.9\\
	& & IBP+Backward (alt.) & 79.2 & 77.0 & 75.4 & 75.1 & 74.5 & 74.1 & 73.9\\
	& & IBP+Backward & 78.5 & 77.3 & 77.2 & 77.1 & 77.1 & 77.1 & 77.1\\
	
	\cline{2-10}
	
	& \multirow{2}{*}{$\delta_\textrm{train}=6$} & IBP & 78.4$\pm$0.34 & 76.6$\pm$0.30 & 76.6$\pm$0.30& 76.6$\pm$0.30& 76.6$\pm$0.30& 76.6$\pm$0.30& 76.6$\pm$0.30\\
	& & IBP+Backward & 78.5$\pm$0.08 & \textbf{77.4$\pm$0.21}  & \textbf{77.4$\pm$0.19}  & \textbf{77.4$\pm$0.19}  & \textbf{77.4$\pm$0.20}  & \textbf{77.4$\pm$0.20}  & \textbf{77.4$\pm$0.19}\\
\bottomrule[1pt]
  \end{tabular}
  }
  \label{table:nlp}
\end{table}

\paragraph{Verifying and Training Robust NLP Models} 
Previous works were only able to implement simple algorithms such as IBP on simple (e.g. CNN and LSTM) NLP models~\cite{jia2019certified,huang2019achieving} for certified defense. None of them can handle complicated models like Transformer~\cite{vaswani2017attention} or train with tighter LiRPA bounds.
We show that our algorithm can train certifiably robust models for LSTM and Transfomrer sentiment classifiers on SST-2~\cite{socher2013recursive}. 
We consider synonym-based word substitution with $\delta\!\leq\!6$ (up to 6 word substitutions). We provide more backgrounds and training details in Appendix \ref{apd:SST_training}.
In Table~\ref{table:nlp}, we first verify \emph{normally trained} ($\delta_\textrm{train}\!=\!0$) LSTM and Transformer. 
Unfortunately, most configurations cannot yield a non-trivial verified accuracy (larger than 1\%), except for the case of using the forward mode and forward+backward mode perturbation analysis 
on a Transformer.
We then conduct certified defense with $\delta_\textrm{train}\!=\!\{1,6\}$ using IBP as in~\citep{jia2019certified,huang2019achieving} and our efficient IBP+Backward perturbation analysis. 
Models trained using IBP+Backward outperforms pure IBP (similar to our observations in computer vision tasks), and the verified test accuracy is significantly better than naturally trained models. 
The results demonstrate that our framework allows us to better verify and train complex NLP models using tight LiRPA bounds.

\paragraph{Training Neural Networks with Guaranteed Flatness}
Recently, some researchers~\cite{he2019asymmetric, jastrzebski2018finding,goyal2017accurate, hoffer2017train} have hypothesized that DNNs optimized with stochastic gradient descent (SGD) can find wide and flat local minima which may be associated with good generalization performance. 

\begin{figure}[t!]  
\centering
\hspace{-15mm}
\begin{subfigure}{0.45\linewidth}
   \centering
\begin{tabular}{cc}
 \includegraphics[width=.45\textwidth]{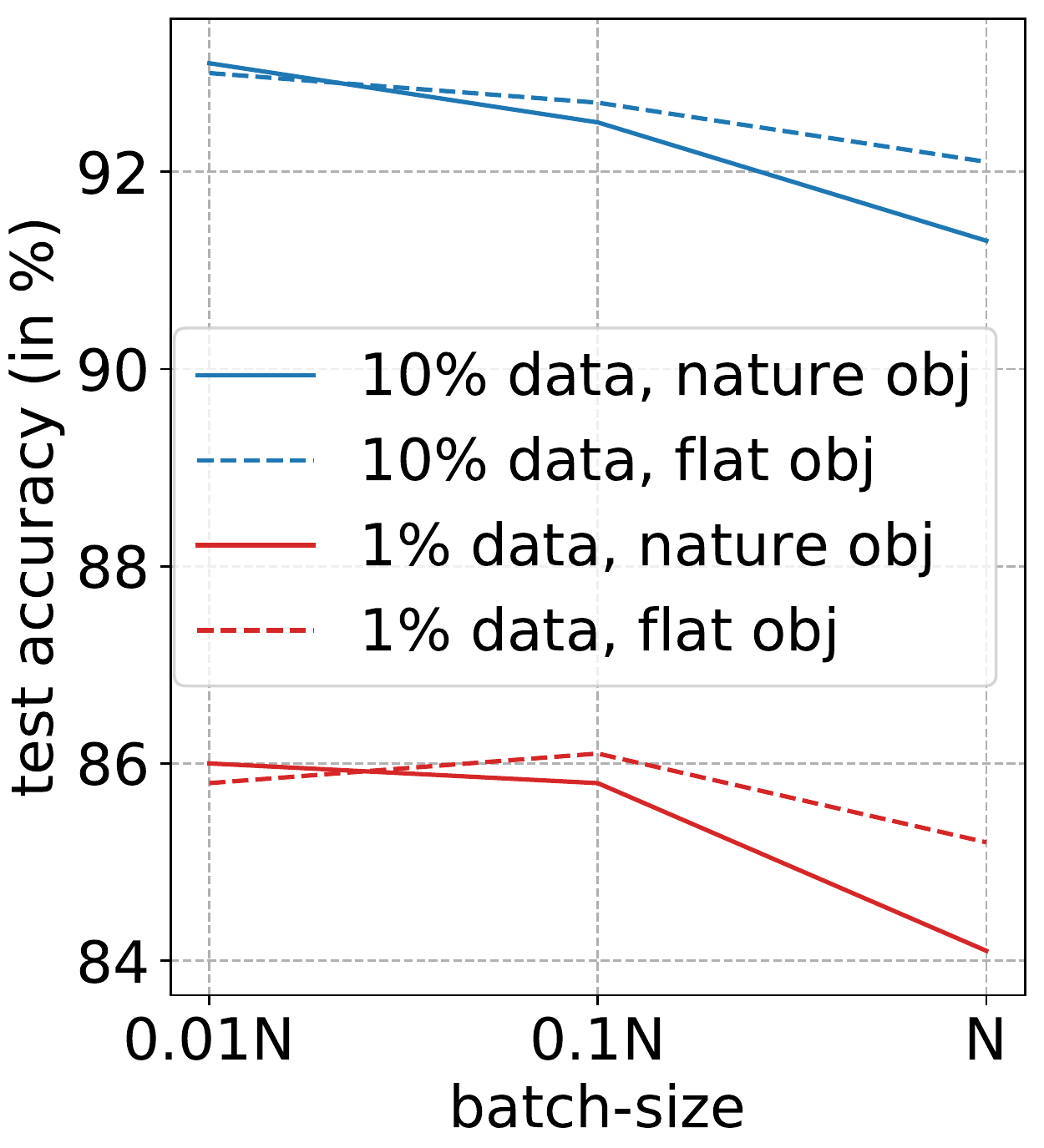}
 &
 \hspace{-1mm}\includegraphics[width=.45\textwidth]{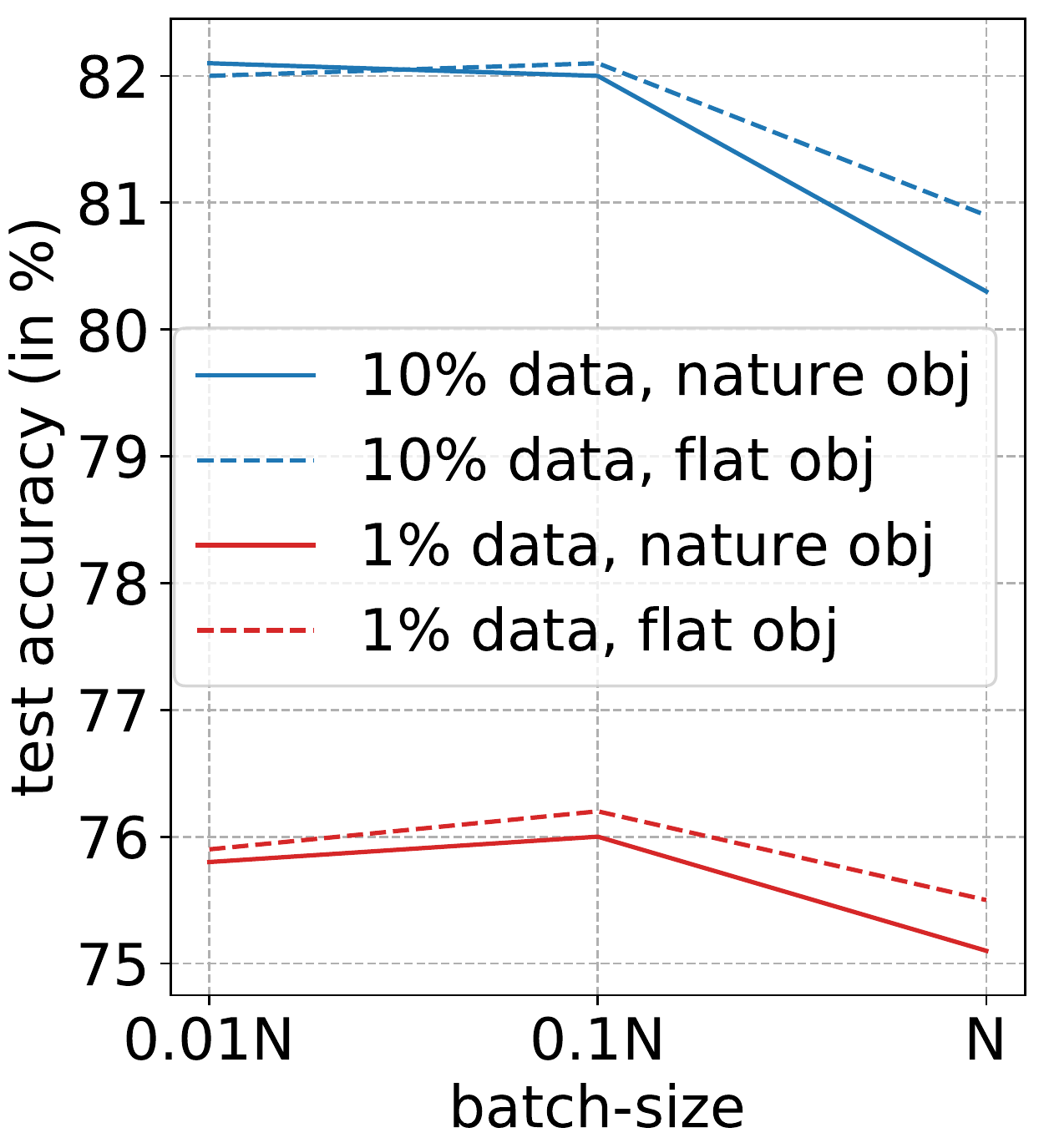} \\
 \footnotesize{ MNIST} &\footnotesize{ FashionMNIST}
 \end{tabular}
 \caption{}
 \label{fig: flat_acc}
\end{subfigure}
\hspace{-2mm}
\begin{subfigure}{0.45\linewidth}
   \centering
   \begin{tabular}{cc}
 \includegraphics[width=.6\textwidth]{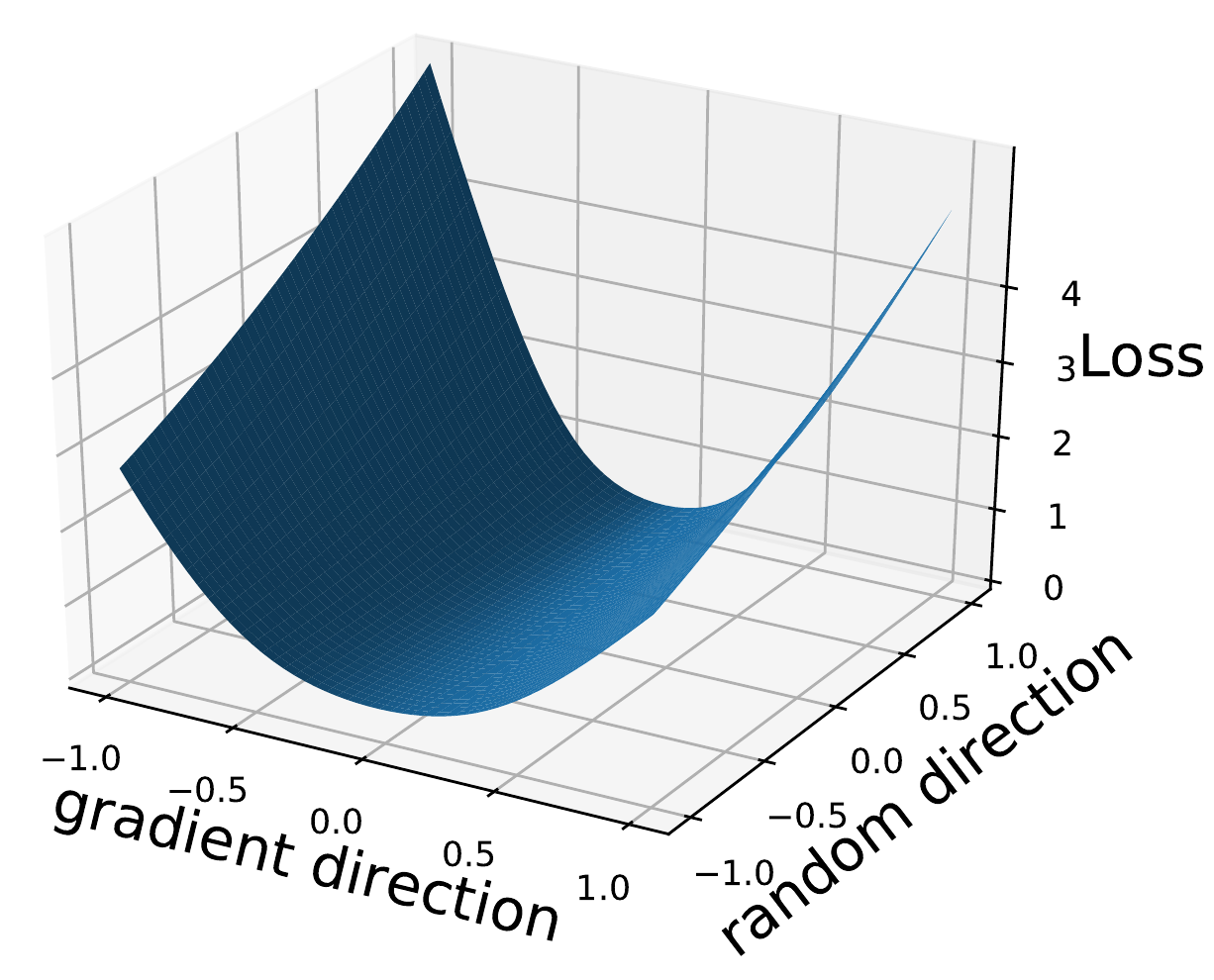} &
 \hspace{-5mm}
 \includegraphics[width=.6\textwidth]{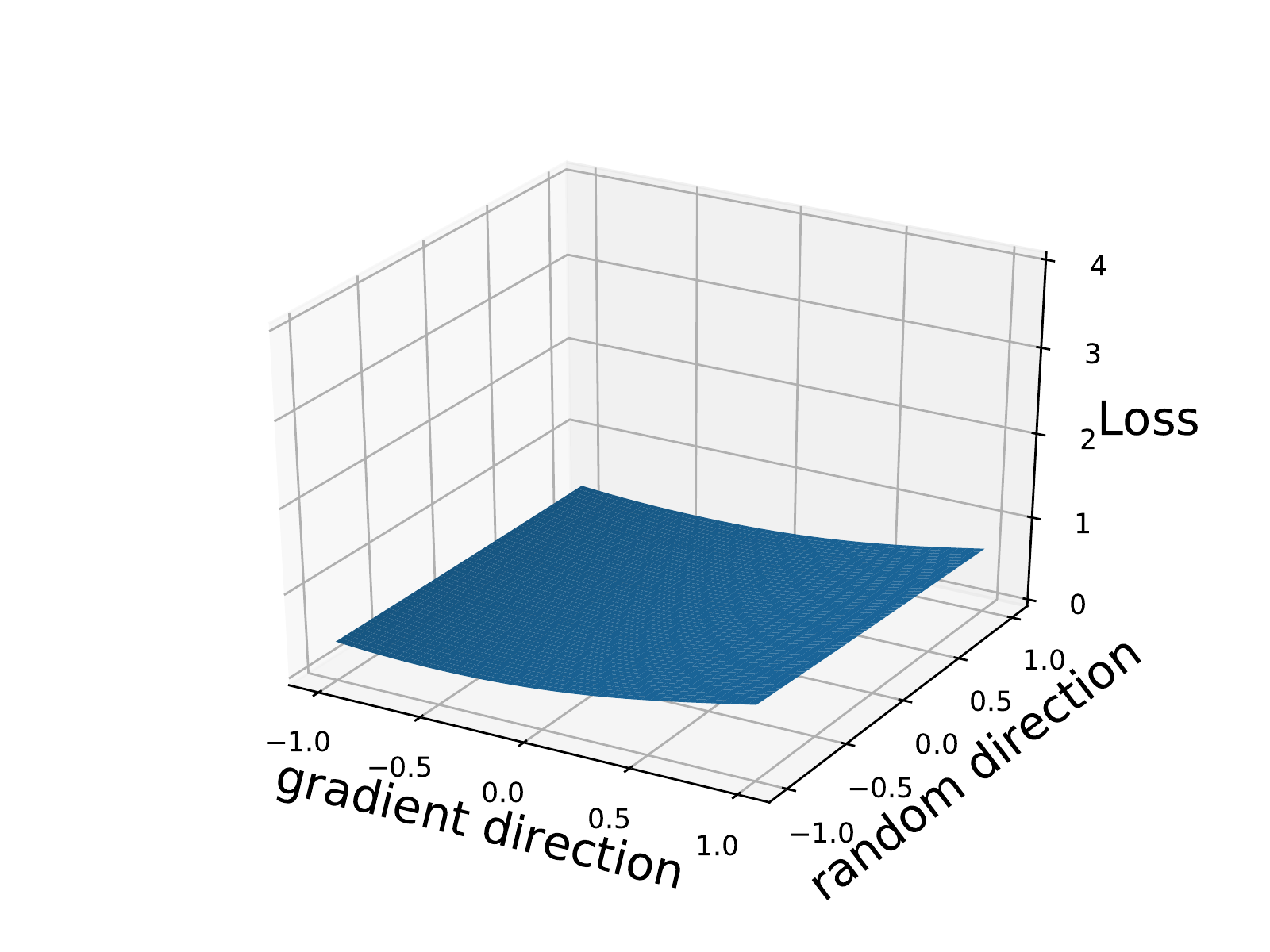}
\\
\footnotesize{ Nature model} & \hspace{-5mm} \footnotesize{ Flat  model}
 \end{tabular}
 \vspace{-1mm}
  \caption{}
  \label{fig: optimization_loss}
\end{subfigure}
\vspace{-5pt}
\caption {Application of applying LiRPA bounds to network parameters to obtain a model with a provably ``flat'' loss surface. (a) Test accuracy of naturally trained models and ``flat'' objective trained models  on MNIST and FashionMNIST with different combinations of data size and batch size. 
(b) The training loss landscape of models trained with nature and flat objective on 10\% data of MNIST with $0.1 N$ batch size. We plot the loss surface along the gradient direction and a random direction.}
\end{figure}  

Most previous works on LiRPA based certified defense only implemented input perturbations analysis. Our framework naturally extends to perturbation analysis on network parameters $\theta$
as they are also independent nodes in a computational graph (e.g., node $x_2$ in Figure~\ref{fig:backward}). This requires to relax the multiplication operation (e.g., the MatMul nodes in Figure~\ref{fig:backward}) which was first discussed in~\citet{shi2020robustness}, and our Algorithm~\ref{alg:backward} can then be directly applied.  
With this advantage, LiRPA can compute provable upper and lower bounds on the local ``flatness'' around a certain point $\theta_0$ for some loss $\mathcal{L}$:
\begin{equation}
\label{eq:bound_loss_landscape}
\small{
\mathcal{L}(\theta_0) - C_L(\theta_0) \leq \mathcal{L}(\theta_0 + \Delta \theta) \leq \mathcal{L}(\theta_0) + C_U(\theta_0), \enskip \text{for all} \enskip \|\Delta \theta\|_2 \leq \epsilon,}
\end{equation}
where $C_L$ and $C_U$ are linear functions of $\theta_0$ that can be found using LiRPA. This is a ``zeroth order'' flatness criterion, where we guarantee that the loss value does not change too much in a small region around $\theta_0$, and we do not have further assumptions on gradients or Hessian of the loss. 
When $\theta_0$ is a good solution, $\mathcal{L}(\theta_0)$ is close to 0, so we can simply set the left hand side of~\eqref{eq:bound_loss_landscape} to 0 and upper bound $\mathcal{L}(\theta_0 + \Delta \theta)$ to ensure flatness.
Using our framework, we can train a classifier that guarantees flatness of local optimization landscape, 
by minimizing the ``flat'' objective $\mathcal{L}(\theta_0) + C_U(\theta_0)$ for the perturbation set $\sS(\theta_0)\!=\!\{\theta\!:\!\| \theta - \theta_0 \|_2 \leq \epsilon\}$ where $\theta_0$ is the current network parameter. When this ``flat'' objective is close to 0, we guarantee that $\mathcal{L}$ is close to 0 for all $\theta \in \sS(\theta_0)$. 
We build a three-layer MLP model with $[64, 64, 10]$ neurons in each layer and conduct experiments using only $10\%$ and $1\%$ of the training data in MNIST and FashionMNIST, and we then test on the full test set to aggressively evaluate the generalization performance. 
We also aggressively set the batch size to $\{0.01N, 0.1N, N\}$ as in~\cite{jastrzebski2018finding} where $N$ is the size of training dataset. Additional details can be found in Appendix~\ref{sec:app_detail_flat}.

The test accuracies of the models trained with regular cross entropy and our ``flat'' objective are shown in Figure\,\ref{fig: flat_acc}. We visualize their loss surfaces in Figure~\ref{fig: optimization_loss}. When batch size is increased or fewer data are used, test accuracy generally decreases due to overfitting, which is consistent with \cite{keskar2016large}. 
For models trained with the flat objective, the accuracy tends to be better, especially when a very large batch size is used. 
These observations provide some evidence for the hypothesis that a flat local minimum generalizes better, however, we cannot exclude the possibility that the improvements come from side effects of our objective. 
Our focus is to demonstrate potential applications beyond neural network verification of our framework rather than proving this hypothesis.



\section*{Broader Impact}
In this paper, we develop an automatic framework to enable perturbation analysis on any neural network structures. Our framework can be used in a wide variety of tasks ranging from robustness verification to certified defense, and potentially many more applications requiring a provable perturbation analysis. It can also play an important building block for several safety-critical ML applications, such as transportation, engineering, and healthcare, etc. We expect that our framework will significantly improve the robustness and reliability of real-world ML systems with theoretical guarantees.

An important product of this paper is an open-source LiRPA library with over 10,000 lines of code, which provides automatic and differentiable perturbation analysis. This library can tremendously facilitate the use of LiRPA for the research community as well as industrial applications, such as verifiable plant control~\citep{wong2020neural}. Our library of LiRPA on general computational graphs can also inspire further improved implementations on automatic outer bounds calculations with provable guarantees.
  
Although our focus on this paper has been on exploring known perturbations and providing guarantees in such clairvoyant scenarios, in real-world an adversary (or nature) may not adhere to our assumptions. Thus, we may additionally want to understand implication of these unknown scenarios on the system performance. This is a relatively unexplored area in robust machine learning, and we encourage researchers to understand and mitigate the risks arising from unknown perturbations in these contexts.

\section*{Acknowledgments and Disclosure of Funding}
This work was performed under the auspices of the U.S. Department of Energy by Lawrence Livermore National Laboratory under Contract DE-AC52-07NA27344 and was partly supported by the National Science Foundation CNS-1932351,
NSFC key project No. 61936010, NSFC regular project No. 61876096, 
NSF IIS-1901527, NSF IIS-2008173, and ARL-0011469453.

\bibliographystyle{preprint}
\bibliography{bib}

\newpage

\appendix

\noindent 
In Appendix~\ref{apd:additional_discussions}, we provide more discussions on LiRPA bounds, including detailed algorithm and complexity analysis, comparison of different LiRPA implementations, and also a small numerical example in Appendix~\ref{sec:small_example}. 
In Appendix~\ref{apd:proofs}, we provide proofs of the theorems.
We provide additional experiments, including more LiRPA trained TinyImageNet models and IBP baselines in Appendix~\ref{sec:app_vision_training}, and we also provide details for each experiment in Appendix~\ref{apd:additional_experiments}.

\section{Additional Discussions on LiRPA Bounds}
\label{apd:additional_discussions}
\subsection{Oracle Functions and the Linear Relaxation of Nonlinear Operations}
\label{apd:functions}

In this section, we summarize some examples of oracle functions as derived in previous works~\citep{zhang2018efficient,wang2018efficient,shi2020robustness}.
In Table~\ref{table:oracle_functions}, we provide a list of oracle functions of three basic operation types, including affine transformation, unary nonlinear function, and binary nonlinear function.
Most common operations involved in neural networks can be addressed following these basic operation types.
For example, dense layers and convolutional layers are affine transformations, activation functions are unary nonlinear functions, multiplication and division are binary nonlinear functions, and matrix multiplication or dot product of two variable matrices can be considered as multiplications with an affine transformation.

Parameters $\underline{\alpha}, \underline{\beta}, \underline{\gamma},\overline{\alpha},\overline{\beta},\overline{\gamma}$ in Table~\ref{table:oracle_functions} are involved in the linear relaxation of nonlinear operations.
For example, for ReLU, $\sigma(h_j(\rmX))=\max(h_j(\rmX),0)$, is a piecewise linear function and can be linearly relaxed w.r.t. the bounds of $h_j(\rmX)$, denoted as $l\leq h_j(\rmX)\leq u$.
When $u\leq 0$ or $l\geq 0$, $\sigma(h_j(\rmX))$ is a linear function on $h_j(\rmX)\in[l,u]$, and thus $\sigma(h_j(\rmX))=h_j(\rmX)$ is a linear function, i.e., we can take $\underline{\alpha}=\overline{\alpha}=1, \underline{\beta}=\overline{\beta}=0$.
Otherwise, for $l<0<u$, we can take the line passing $(l,\sigma(l))$ and $(u,\sigma(u))$ as the linear upper bound, i.e., $\overline{\alpha}=\frac{\sigma(u)-\sigma(l)}{u-l}$, $\overline{\beta}=-\overline{\alpha}l$.
For the lower bound, it can be any line with $0\leq \underline{\alpha}\leq 1$ and $\underline{\beta}=0$.
To minimize the relaxation error, \citet{zhang2018efficient} proposed to adaptively choose $\underline{\alpha}=I(u>|l|)$ in LiRPA.
Alternatively, we can also select $\underline{\alpha}=0$, and thereby the linear relaxation can be provably tighter than IBP bounds.
This lower bound can be used for training ReLU networks with loss fusion.
Figure~\ref{fig:relu_relaxation} compares the linear bounds in LiRPA and IBP respesctively.

\begin{figure}[ht]
    \centering
    \includegraphics[width=.6\textwidth]{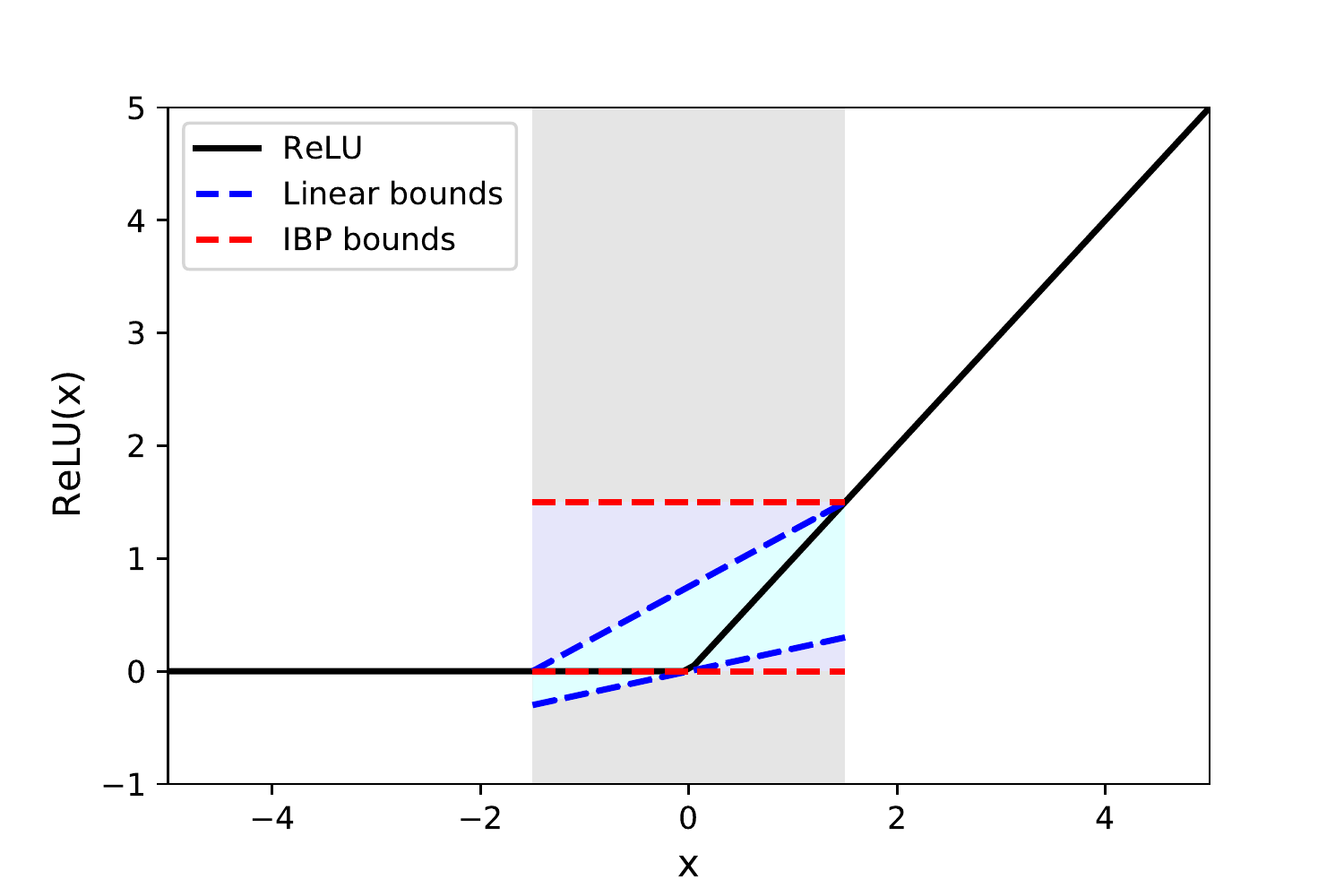}
    \caption{An example of ReLU relaxation when $l=-1.5,\,u=1.5$. 
    Here we take the blue dashed lines as the linear bounds, where any line passing $(0,0)$ with a slope between 0 and 1 can be a valid lower bound.
    In contrast, IBP takes the fixed red dashed lines as the lower and upper bounds respectively, which is a looser relaxation.}
    \label{fig:relu_relaxation}
\end{figure}

The detailed derivation of the oracle functions shown in Table~\ref{table:oracle_functions} has been covered in previous works~\citep{zhang2018efficient,wang2018efficient,shi2020robustness} and is not a focus of this paper.
We refer readers to those existing works for details.

\begin{table}[!ht]
  \centering
    \caption{A list of common types of operations, their definition $H_i$, and their corresponding oracle functions $F_i$ and $G_i$.
    Subscript ``+'' stands for taking positive elements from the matrix or vector while setting other elements to zero, and vice versa for subscript ``-''.
    $\text{diag}(\cdot)$ stands for constructing a diagonal matrix from a vector.
    $\underline{\alpha},\underline{\beta},\underline{\gamma},\overline{\alpha},\overline{\beta},\overline{\gamma}$ are parameters of linear relaxation that can be derived for each specific nonlinear function.
   }
  \adjustbox{max width=\textwidth}{
  \begin{tabular}{c|cl}
  \toprule[1pt]
   Operation Type & \multicolumn{2}{c}{Functions}\\
   \hline
   
   \multirow{9}{*}{Affine Transformation} &
   $H_i$ &  
   $h_i(\rmX) = \hat{\rmW}_i h_j(\rmX) + \hat{\rvb}_i$\\
   \cline{2-3}
   & \multirow{4}{*}{$F_i$} &$\underline{\newA}_j=\underline{\rmA}_i\hat{\rmW}_i$ \\
   & & $\overline{\newA}_j=\overline{\rmA}_i\hat{\rmW}_i$\\
   & & $\underline{\newd}=\underline{\rmA}_i\hat{\rvb}_i$\\
   & & $\overline{\newd}=\overline{\rmA}_i\hat{\rvb}_i$\\
   \cline{2-3}
   & \multirow{4}{*}{$G_i$} & $\underline{\rmW}_i = \hat{\rmW}_{i,+} \underline{\rmW}_j + \hat{\rmW}_{i,-} \overline{\rmW}_j$\\
   & &  $\underline{\rvb}_i = \hat{\rmW}_{i,+}\underline{\rvb}_j + \hat{\rmW}_{i,-} \overline{\rvb}_j + \hat{\rvb}_i$\\
   & & $\overline{\rmW}_i = \hat{\rmW}_{i,+} \overline{\rmW}_j + \hat{\rmW}_{i,-} \underline{\rmW}_j$\\
   & & $\overline{\rvb}_i = \hat{\rmW}_{i,+}\overline{\rvb}_j + \hat{\rmW}_{i,-} \underline{\rvb}_j + \hat{\rvb}_i$\\
   \hline

   \multirow{9}{*}{Unary Nonlinear Function} &
   $H_i$ &  
   $h_i(\rmX) = \sigma(h_j(\rmX))$\\
   \cline{2-3}
   & \multirow{4}{*}{$F_i$} &$\underline{\newA}_j=\underline{\rmA}_{i,+} \text{diag}(\underline{\alpha}) +\underline{\rmA}_{i,-} \text{diag}(\overline{\alpha}) $ \\
   & & $\overline{\newA}_j=\overline{\rmA}_{i,+} \text{diag}(\overline{\alpha}) +\overline{\rmA}_{i,-} \text{diag}(\underline{\alpha}) $\\
   & & $\underline{\newd}=\underline{\rmA}_{i,+} \underline{\beta}+ \underline{\rmA}_{i,-} \overline{\beta}$\\
   & & $\overline{\newd}=\overline{\rmA}_{i,+} \overline{\beta}+ \overline{\rmA}_{i,-} \underline{\beta}$\\
   \cline{2-3}
   & \multirow{4}{*}{$G_i$} & $\underline{\rmW}_i =  \text{diag}_+(\underline{\alpha}) \underline{\rmW}_j + \text{diag}_-(\underline{\alpha}) \overline{\rmW}_j    $\\
   & &  $\underline{\rvb}_i =  \text{diag}_+(\underline{\alpha}) \underline{\rvb}_j + \text{diag}_-(\underline{\alpha}) \overline{\rvb}_j +\underline{\beta}$\\
   & & $\overline{\rmW}_i =  \text{diag}_+(\overline{\alpha}) \overline{\rmW}_j + \text{diag}_-(\overline{\alpha}) \underline{\rmW}_j $\\
   & & $\overline{\rvb}_i =\text{diag}_+(\overline{\alpha}) \overline{\rvb}_j + \text{diag}_-(\overline{\alpha}) \underline{\rvb}_j +\overline{\beta} $\\
   \cline{2-3}
   & where & $ \underline{\alpha} h_j(\rmX) + \underline{\beta} \leq h_i(\rmX) \leq \overline{\alpha} h_j(\rmX) + \overline{\beta} $ 
   \\
   \hline
   
   \multirow{12}{*}{Binary Nonlinear Function} &
   $H_i$ &  
   $h_i(\rmX) = \pi(h_j(\rmX),h_k(\rmX))$\\
   \cline{2-3}
   & \multirow{6}{*}{$F_i$} &$\underline{\newA}_j=\underline{\rmA}_{i,+} \text{diag}(\underline{\alpha}) +\underline{\rmA}_{i,-} \text{diag}(\overline{\alpha}) $ \\
   & & $\overline{\newA}_j=\overline{\rmA}_{i,+} \text{diag}(\overline{\alpha}) +\overline{\rmA}_{i,-} \text{diag}(\underline{\alpha}) $\\
   & &$\underline{\newA}_k=\underline{\rmA}_{i,+} \text{diag}(\underline{\beta}) +\underline{\rmA}_{i,-} \text{diag}(\overline{\beta}) $ \\
   & & $\overline{\newA}_k=\overline{\rmA}_{i,+} \text{diag}(\overline{\beta}) +\overline{\rmA}_{i,-} \text{diag}(\underline{\beta}) $\\
   & & $\underline{\newd}=\underline{\rmA}_{i,+} \underline{\gamma}+ \underline{\rmA}_{i,-} \overline{\gamma}$\\
   & & $\overline{\newd}=\overline{\rmA}_{i,+} \overline{\gamma}+ \overline{\rmA}_{i,-} \underline{\gamma}$\\
   \cline{2-3}
   & \multirow{4}{*}{$G_i$} & $\underline{\rmW}_i =  \text{diag}_+(\underline{\alpha}) \underline{\rmW}_j + \text{diag}_-(\underline{\alpha}) \overline{\rmW}_j+
   \text{diag}_+(\underline{\beta}) \underline{\rmW}_k + \text{diag}_-(\underline{\beta}) \overline{\rmW}_k
   $\\
   & &  $\underline{\rvb}_i =  \text{diag}_+(\underline{\alpha}) \underline{\rvb}_j + \text{diag}_-(\underline{\alpha}) \overline{\rvb}_j +\underline{\beta}+ \text{diag}_+(\underline{\beta}) \underline{\rvb}_k + \text{diag}_-(\underline{\beta}) \overline{\rvb}_k +\underline{\gamma}   $\\
   &  & $\overline{\rmW}_i =  \text{diag}_+(\overline{\alpha}) \overline{\rmW}_j + \text{diag}_-(\overline{\alpha}) \underline{\rmW}_j +
   \text{diag}_+(\overline{\beta}) \overline{\rmW}_k + \text{diag}_-(\overline{\beta}) \underline{\rmW}_k$\\
   & &  $\overline{\rvb}_i =  \text{diag}_+(\overline{\alpha}) \overline{\rvb}_j + \text{diag}_-(\overline{\alpha}) \underline{\rvb}_j +\overline{\beta}+ \text{diag}_+(\overline{\beta}) \overline{\rvb}_k + \text{diag}_-(\overline{\beta}) \underline{\rvb}_k +\overline{\gamma}   $\\
   \cline{2-3}
   & where & $ \underline{\alpha} h_j(\rmX) +\underline{\beta} h_k(\rmX) + \underline{\gamma} \leq h_i(\rmX) \leq \overline{\alpha} h_j(\rmX) +\overline{\beta} h_k(\rmX) + \overline{\gamma}$\\   
   \bottomrule[1pt]
  \end{tabular}
  }
  \label{table:oracle_functions}
\end{table}  

\subsection{Complexity Comparison between Different Perturbation Analysis Modes}
\label{apd:complexity}

In this section, we compare the computational cost of different perturbation analysis modes.
We assume that $D_x$ and $D_y$ are the total dimension of the perturbed independent nodes and the final output node respectively.
We focus on a usual case in classification models, where the final output node is a logits layer whose dimension equals to the number of classes and thus usually $D_y\ll D_x$ holds true, or the final output is a loss function with $D_y=1\ll D_x$ if loss fusion is enabled.
We also assume that the time complexity of a regular forward pass of the computational graph (e.g., a regular inference pass) is $O(r)$, and the complexity of a regular back propagation pass in gradient computation is also asymptotically $O(r)$. Note that the overall time complexity of LiRPA depends on oracle functions, and in the below analysis we focus on common cases (e.g., common activation functions in Table~\ref{table:oracle_functions}). 

\paragraph{Interval bound propagation (IBP)} IBP can be seen as a special and degenerated case of LiRPA bounds. The time complexity of pure IBP is still $O(r)$ since it computes two output values, a lower bound and a upper bound, for each neuron, and thus the time complexity is the same as a regular forward pass which computes one output value for each neuron. 
However, pure IBP cannot give tight enough bounds especially for models without certifiably robust training.

\paragraph{Backward mode bound propagation} Backward mode LiRPA oracles typically require bounds of intermediate nodes $\underline{\rvh}_j$, $\overline{\rvh}_j$ for all $j \in u(i)$ for a node $i$ (referred to as ``pre-activation bounds'' in some works). Assuming these intermediate bounds are known; in this case, the oracle function $F_i$ typically has the same time complexity as back propagation of gradients through node $i$ (e.g., for linear layers it is the transposed operation of $H_i (\cdot)$). However, unlike in back propagation where the gradients is computed for a scalar function, in backward mode LiRPA we need to compute $O(D_y)$ values for each neuron, and these values stand for the coefficients of the linear bounds of the $D_y$ final output neurons. The time complexity is roughly $D_y$ times back propagation time, $O(D_y r)$. 

For a purely backward perturbation analysis that can be extended from CROWN~\citep{zhang2018efficient}, the bounds of intermediate nodes needed for the oracle functions are also computed with a backward mode LiRPA. Assuming there are $N$ nodes in total (including output nodes and all intermediate nodes) that require LiRPA bounds, the total time complexity is asymptotically $O(N r)$ where $N$ can be a quite large number (e.g., for feed-forward ReLU networks $N$ includes hidden neurons over all layers and $N \gg D_y$), so this approach cannot scale to large graphs or be used for efficient training.

\paragraph{Forward mode bound propagation}
In the forward mode perturbation analysis, 
since we represent the bounds of each neuron with linear functions w.r.t. the perturbed independent nodes, we need to compute $O(D_x)$ values for each neuron. Usually, the oracle functions $G_i$ has the same asymptotic complexity as the computation function $H_i(\cdot)$; however, the inputs of $G_i$ include dimension $D_x$, and the total time complexity of is roughly $O(D_x r)$.
Note that in the implementation of the forward mode, we do not compute linear functions w.r.t. all the independent nodes, but we only need to consider those perturbed independent nodes while treating the other independent nodes as constants, and thereby $D_x$ may be much smaller than the dimension of $\rmX$, e.g., model parameters can be excluded if they are not perturbed.

\paragraph{Efficient hybrid bounds} Among the LiRPA variants, \emph{IBP+Backward} with a complexity of $O(D_y r)$ is usually most efficient for classification models and is used in our certified training experiments. When loss fusion is enabled, $D_y=1$ during training, and thereby the complexity of \emph{IBP+Backward} is  $O(r)$, which is the same as that of IBP.
In this way, our loss fusion technique can significantly improve the scalability of certified training with LiRPA bounds.
To obtain tighter bounds for intermediate nodes which can also tighten the final output bounds, we may use pure forward or \emph{Forward+Backward} mode with a complexity of $O(D_x r)$ which is usually larger than that of \emph{IBP+Backward} when $D_y\ll D_x$.
The forward mode LiRPA can be potentially useful for situations where $D_x\ll D_y$, e.g., for generative models with a large output dimension. We leave this as our future work.

\subsection{The GetOutDegree Auxiliary Function in Backward Mode Perturbation Analysis}
\label{apd:getdegree}

\begin{algorithm}[ht]
    \caption{Auxiliary Function for Computing Output Degrees}
    \begin{algorithmic}
        \FUNCTION{GetOutDegree~($o$)}
            \STATE Create BFS queue and $Q.push(o)$
            \STATE $d_i\leftarrow 0\ \  (\forall i\leq n)$
            \WHILE{$Q$ is not empty}
                \STATE $i = Q.pop()$
                \FOR{$j\in u(i)$}
                    \STATE $d_{j}+\!\!=1$
                    \IF{$j$ has not been in $Q$}\STATE{$Q.push(j)$}
                    \ENDIF
                \ENDFOR
            \ENDWHILE
        \ENDFUNCTION 
    \end{algorithmic}
    \label{alg:get_degree}
\end{algorithm}

As mentioned in Section 3.4, we have an auxiliary ``GetOutDegree'' function for computing the degree $d_i$ of each node $i$, which is defined as the the number of outputs nodes of node $i$ that the node $o$ is dependent on. 
This function is illustrated in Algorithm \ref{alg:get_degree}. 
We use a BFS pass. 
At the beginning, node $o$ is added into the queue.
Next, each time we pick a node $i$ from the head of the queue.
Node $o$ is dependent on node $i$, and thus we increase the degree of its input nodes, each $d_j(j\in u(i))$, by 1.
Node $o$ is also dependent on node $j(j\in u(i))$ and we add node $j$ to the queue if it has never been in the queue yet.
We repeat this process until the queue becomes empty, and at this time any node $i$ that node $o$ is dependent on has been visited and has contributed to the $d_{j}(j\in u(i))$ of its input nodes.

\subsection{A Small Example of LiRPA Bounds}
\label{sec:small_example}
We provide a small example to illustrate the computation of our LiRPA methods. We assume that we have a simple ReLU network with 2 hidden layers, with weight matrix of each layer as below:
\begin{equation*}
    \hat{\rmW}_1 = [[2, 1], [-3, 4]], \enskip
    \hat{\rmW}_2 = [[4, -2], [2, 1]], \enskip
    \hat{\rmW}_3 = [-2, 1],
\end{equation*}
and we do not consider bias terms of the layers here for simplicity.

Given a clean input $\rmX_0 = [[0], [1]]$ and $\ell_\infty$ perturbation with $\eps = 2$, we can compute the bounds of the last layer and compare the results from IBP, forward mode LiRPA and backward mode LiRPA respectively. 

\paragraph{IBP} 
\begin{equation*}
\begin{aligned}
\overline{\rvh}_1 &= [[2], [3]], \\
\underline{\rvh}_1 &= [[-2], [-1]],\\
    \overline{\rvh}_2 &= \hat{\rmW}_{1, +}\overline{\rvh}_1  +  \hat{\rmW}_{1, -}\underline{\rvh}_1 = [[7], [12]] + [[0], [6]] = [[7], [18]],\\
    \underline{\rvh}_2 &= \hat{\rmW}_{1, +} \underline{\rvh}_1  +  \hat{\rmW}_{1, -} \overline{\rvh}_1= [[-5], [-4]] + [[0], [-6]] = [[-5], [-10]],\\
    \overline{\rvh}_3 &=  \hat{\rmW}_{2, +}\overline{\rvh}_2 +  \hat{\rmW}_{2, -}\underline{\rvh}_2 = [[28], [32]] + [[0], [0]]= [[28], [32]],\\
    \underline{\rvh}_3 &=  \hat{\rmW}_{2, +}\underline{\rvh}_2 +  \hat{\rmW}_{2, -}\overline{\rvh}_2 = [[0], [0]] + [[-36], [0]] = [[-36], [0]],\\
    \overline{\rvh}_4 &= \hat{\rmW}_{3, +}\overline{\rvh}_3 + \hat{\rmW}_{3, -}\underline{\rvh}_3 = [32] + [0] = [32],\\
    \underline{\rvh}_4&= \hat{\rmW}_{3, +}\underline{\rvh}_3 + \hat{\rmW}_{3, -}\overline{\rvh}_3 = [0] + [-56] = [-56].
\end{aligned}
\end{equation*}

In the following computation of LiRPA bounds, we always use zero as the lower bound of ReLU activation.

\paragraph{Forward Mode LiRPA} 

\begin{equation*}
    \begin{aligned}
    	\overline{\rmW}_1&=\underline{\rmW}_1=\rmI, \enskip\underline{\rvb}_1=\overline{\rvb}_1=\vzero,\\
        \overline{\rmW}_2 &= \underline{\rmW}_2 = \hat{\rmW}_1 = [[2, 1], [-3, 4]],\\
        \overline{\rvh}_2 &= 2[[3], [7]] + [[1], [4]] = [[7], [18]],\\
        \underline{\rvh}_2 &= -2[[3], [7]] + [[1], [4]] = [[-5], [-10]].
    \end{aligned}
\end{equation*}

We compute the relaxation of the first layer ReLU activations:
\begin{equation*}
    \begin{aligned}
        \text{diag}(\overline{\alpha}_1) &= [[0.58, 0], [0, 0.64]], \\\text{diag}(\underline{\alpha}_1) &= [[0, 0], [0, 0]],\\
        \overline{\beta_1} &= [[2.92], [6.43]]], \\\underline{\beta_1} &= [[0], [0]],\\
    \end{aligned}
\end{equation*}
and then we have:
\begin{equation*}
    \begin{aligned}
        \overline{\rmW}_3 &= \hat{\rmW}_{2, +}(\text{diag}(\overline{\alpha}_1) \overline{\rmW}_2)  + \hat{\rmW}_{2, -} (\text{diag}(\underline{\alpha}_1) \underline{\rmW}_2)= [[4.67, 2.33], [0.40, 3.74]],\\
        \underline{\rmW}_3 &= \hat{\rmW}_{2, -} (\text{diag}(\overline{\alpha}_1) \overline{\rmW}_2) + \hat{\rmW}_{2, +} (\text{diag}(\underline{\alpha}_1) \underline{\rmW}_2) = [[3.86, -5.14], [0, 0]],\\
        \overline{\rvd}_2 &= \hat{\rmW}_{2, +} \overline{\beta}_1 + \hat{\rmW}_{2, -} \underline{\beta}_1 = [[11.67],[12.26] ], 
        \\
        \underline{\rvd}_2 &=\hat{\rmW}_{2, -} \overline{\beta}_1 + \hat{\rmW}_{2, +} \underline{\beta}_1 = [[-12.86], [0]],\\
        \overline{\rvh}_3 &= \underline{\rmW_3} \rmX_0 + \lVert \underline{\rmW_3} \rVert_1 \epsilon + \underline{\rvd_2} =[[28], [24]],\\
        \underline{\rvh}_3 &= \underline{\rmW_3} \rmX_0 + \lVert \underline{\rmW_3} \rVert_1 \epsilon + \underline{\rvd_2} =[[-36], [0]].
    \end{aligned}
\end{equation*}
We then repeat the computation on the second layer:
\begin{equation*}
    \begin{aligned}
        \text{diag}(\overline{\alpha}_2) &= [[0.4375, 0], [0, 1]],\\ \text{diag}(\underline{\alpha}_2) &= [[0, 0], [0, 1],]\\
        \overline{\beta_2} &= [[15.75], [0]], \\
        \underline{\beta_2} &= [[0], [0]],\\
    \end{aligned}
\end{equation*}
\begin{equation*}
    \begin{aligned}        
        \overline{\rmW}_4 &= \hat{\rmW}_{3, +} (\text{diag}(\overline{\alpha}_2) \overline{\rmW}_3) + \hat{\rmW}_{3, -} (\text{diag}(\underline{\alpha}_2) \underline{\rmW}_3)= [0.40, 3.74],\\
        \underline{\rmW}_4 &= \hat{\rmW}_{3, -} (\text{diag}(\overline{\alpha}_2) \overline{\rmW}_3) + \hat{\rmW}_{3, +} (\text{diag}(\underline{\alpha}_2) \underline{\rmW}_3) = [-4.08, -2.04] ,\\
        \overline{\rvd}_3 &= \hat{\rmW}_{3, +}(\overline{\beta}_2 + \text{diag}(\overline{\alpha}_2)\overline{\beta}_2)+ \hat{\rmW}_{3, -}(\underline{\beta}_2 + \text{diag}(\overline{\alpha}_2)\underline{\beta}_2) = [12.26],\\
        \underline{\rvd}_3 &= \hat{\rmW}_{3, -}(\overline{\beta}_2 + \text{diag}(\overline{\alpha}_2)\overline{\beta}_2)+ \hat{\rmW}_{3, +}(\underline{\beta}_2 + \text{diag}(\overline{\alpha}_2)\underline{\beta}_2) =[-41.71],\\
        \overline{\rvh}_4 &= \overline{\rmW}_4 \rmX_0 + \lVert \overline{\rmW}_4 \rVert_1 \epsilon + \overline{\rvd}_3 = [24.29],\\
        \underline{\rvh}_4 &= \underline{\rmW}_4 \rmX_0 + \lVert \underline{\rmW}_4 \rVert_1 \epsilon + \underline{\rvd}_3 = [-56].
    \end{aligned}
\end{equation*}

\paragraph{Backward Mode LiRPA} Here we reuse the intermediate results from the forward mode LiRPA for the linear relaxation of ReLU activations, where  
\begin{equation*}
    \begin{aligned}
        \text{diag}(\overline{\alpha}_1) &= [[0.58, 0], [0, 0.64]],\\ \text{diag}(\underline{\alpha}_1) &= [[0, 0], [0, 0]],\\
        \overline{\beta_1} &= [[2.92], [6.43]]], \\
        \underline{\beta}_1 &= [[0], [0]],\\
        \text{diag}(\overline{\alpha}_2) &= [[0.4375, 0], [0, 1]],\\ \text{diag}(\underline{\alpha}_2) &= [[0, 0], [0, 1]]\\
        \overline{\beta}_2 &= [[15.75], [0]],\\
        \underline{\beta}_2 &= [[0], [0]].
    \end{aligned}
\end{equation*}

We then compute the linear bounds from the last layer to the first layer and finally concretize the linear bounds:
\begin{equation*}
    \begin{aligned}
        \underline{\rmA}_4 &= \overline{\rmA}_4=\rmI,\\
        \underline{\rmA}_3 &=\underline{\rmA}_4 \hat{\rmW}_3 = [-2, 1],\\
        \overline{\rmA}_3 &=\overline{\rmA}_4 \hat{\rmW}_3 = [-2, 1],\\
        \overline{\rmA}_2 &=
        \overline{\rmA}_{3,+} \text{diag}(\overline{\alpha}_2)\hat{\rmW}_2+
        \overline{\rmA}_{3,-}\text{diag}(\underline{\alpha}_2)\hat{\rmW}_2 = [2, 1],\\
        \underline{\rmA}_2 &= 
        \underline{\rmA}_{3,+} \text{diag}(\underline{\alpha}_2)\hat{\rmW}_2+
        \underline{\rmA}_{3,-}\text{diag}(\overline{\alpha}_2)\hat{\rmW}_2 = [-1.5, 2.75],\\
        \overline{\rmA}_1 &= 
        \overline{\rmA}_{2,+} \text{diag}(\overline{\alpha}_1)\hat{\rmW}_1+
        \overline{\rmA}_{2,-}\text{diag}(\underline{\alpha}_1)\hat{\rmW}_1 = [0.40, 3.74],\\
        \underline{\rmA}_1 &=
        \underline{\rmA}_{2,+} \text{diag}(\underline{\alpha}_1)\hat{\rmW}_1+
        \underline{\rmA}_{2,-}\text{diag}(\overline{\alpha}_1)\hat{\rmW}_1 = [-1.75, -0.875],\\
        \overline{\rvd}_1 &= \overline{\rmA}_{2, +} \overline{\beta}_2 + \overline{\rmA}_{2, -} \underline{\beta_2} + \overline{\rmA}_{1, +} \overline{\beta_1} + \overline{\rmA}_{1, -} \underline{\beta_1} = [12.26],\\
        \underline{\rvd}_1 &= \underline{\rmA}_{2, +} \underline{\beta}_2 + \underline{\rmA}_{2, -}\overline{\beta}_2 + \underline{\rmA}_{1, +} \underline{\beta}_1 + \underline{\rmA}_{1, -}\overline{\beta}_1 = [-35.875],\\
        \overline{\rvh}_4 &= \overline{\rmA}_1 \rmX_0 + \lVert \overline{\rmA}_1\rVert_1 \epsilon + \overline{\rvd}_1 = [24.28],\\
        \underline{\rvh}_4 &= \underline{\rmA}_1 \rmX_0 - \lVert \underline{\rmA}_1\rVert_1 \epsilon + \underline{\rvd}_1 = [-42].
    \end{aligned}
\end{equation*}

As we can see from this example, the bounds from the backward mode LiRPA are the tightest compared to those from forward mode LiRPA and IBP, even if we reuse the intermediate relaxation results from the forward mode LiRPA.

\subsection{Existing LiRPA implementations}
\label{apd:lirpa_impl}

We list and compare a few notable LiRPA implementations in Table~\ref{table:lirpa_comparison}.

\begin{table}[ht]
      \caption{Comparison between different implementations for perturbation analysis. (``FF'' = FeedForward network).
   }
     \label{table:lirpa_comparison}
\adjustbox{max width=.58\textwidth}{
 \begin{minipage}{\textwidth}
\begin{threeparttable}
\centering
  \begin{tabular}{c|ccccccccc}
  \toprule[1pt]
   Method & Based On & Mode & Structure & Activation & Perturbation & Differentiability & Automatic\tnote{a} & Efficiency & Tightness\\
   \hline
   DiffAI~\cite{mirman2018differentiable} & PyTorch & Backward, IBP & FF+ResNet & ReLU & $\ell_\infty$ & Yes & No & GPU & ++\\
   IBP~\citep{gowal2018effectiveness,mirman2018differentiable} & TensorFlow & IBP & General & General & $\ell_{\infty}$ & Yes & No & GPU & - \\
   ERAN~\citep{maurereran2018} & C++/CUDA\tnote{b}
   & Backward, IBP, others\tnote{c}
   & General & General & $\ell_p$+semantic & No & No & Partially GPU & ++ \\
   Convex-Adv~\citep{wong2018provable} & PyTorch & Backward & FF+ResNet & ReLU & $\ell_p$ & Yes & No & Multi-GPU & + \\
   Fast-Lin~\citep{weng2018towards} & Numpy & Backward & FF (MLP) & ReLU & $\ell_p$ & No & No & CPU & + \\
   CROWN~\citep{zhang2018efficient} & Numpy & Backward & FF (MLP) & General & $\ell_p$ & No & No & CPU & ++ \\
   CROWN-IBP~\citep{zhang2018efficient} & PyTorch & Backward, IBP & FF & General & $\ell_p$ & Yes & No & Multi-GPU & ++\\
   Ours & PyTorch & Backward, Forward, IBP & General & General & General\tnote{d}
   & Yes & Yes & Multi-GPU & ++ \\
   \bottomrule[1pt]
  \end{tabular}
  \begin{tablenotes}
  \item[a] ``Automatic'' is defined as an user can easily obtain bounds using existing model source code, without manual conversion or implementation. 
  \item[b] ERAN has a TensorFlow frontend to read TensorFlow models, but its backend is written in C++ and partially CUDA.
  \item[c] Other types of bounds like k-ReLU~\citep{singh2019beyond} are provided, but typically much less efficient than IBP or backward mode perturbation analysis.
  \item[d] User supplied perturbation specifications.
  \end{tablenotes}
  \end{threeparttable}
  \end{minipage}
}
\end{table}

\section{Proofs of the Theorems}
\label{apd:proofs}
\subsection{Proof of Theorem 1}
\label{apd:theorm1_proof}

In Theorem~\ref{theorem:backward}, we bound node $o$ with:
\begin{equation}
    \sum_{i\in\rmV} \underline{\rmA}_i h_i(\rmX) + \underline{\rvd} 
    \leq h_o(\rmX) 
    \leq 
    \sum_{i\in\rmV} \overline{\rmA}_i h_i(\rmX) + \overline{\rvd} \quad \forall\rmX\in\sS.
    \label{eq:apd_backward}
\end{equation}
Initially, this inequality holds true with  
\begin{equation}
\underline{\rmA}_o=\overline{\rmA}_o=\rmI,\enskip 
\underline{\rmA}_i=\overline{\rmA}_i=\vzero(i\neq o),\enskip
\underline{\rvd} =\overline{\rvd}=\vzero,
\end{equation}
because then
$$ \sum_{i\in\rmV} \underline{\rmA}_i h_i(\rmX) + \underline{\rvd} = \sum_{i\in\rmV} \overline{\rmA}_i h_i(\rmX) + \overline{\rvd} = h_o(\rmX)$$
meets  \eqref{eq:apd_backward}.

Without loss of generality, we assume that the nodes are numbered in topological order, i.e., for each node $i$ and its input node $j\in u(i)$, $i>j$ holds true, and we assume that there are $n'$ independent nodes.
Then, we have $o=n$, and all the independent nodes have the smallest numbers.
This can be achieved via a topological sort for any computational graph.
We can also ignore nodes that node $o$ does not depend on.
With these assumptions, 
we show a lemma:
\begin{lemma}
	In Algorithm~\ref{alg:backward}, every dependent node $i(i>n')$ will be visited once and only once. And when node $i$ is visited, all nodes that depend on node $i$ must have been visited. 
	\label{lemma:alg2}
\end{lemma}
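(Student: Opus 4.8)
The plan is to prove the two assertions of the lemma together by downward induction on the node index, using the topological numbering and the semantics of the counters $d_i$. Throughout I rely on the reductions already made: $o=n$ is the unique maximal node, the independent nodes are exactly $\{1,\dots,n'\}$, and $\rmG$ has been pruned so that every node is an ancestor of $o$ (or $o$ itself); consequently every dependent node $i>n'$ other than $o$ has out-degree at least $1$ in $\rmG$, since otherwise there would be no path from $i$ to $o$.

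First I would record two bookkeeping facts. (a) After GetOutDegree (Algorithm~\ref{alg:get_degree}), $d_i$ equals the out-degree of $i$ in $\rmG$: its BFS enqueues each node once, and popping a node contributes exactly $+1$ to the counter of each of its inputs. (b) Inside the main while-loop of Algorithm~\ref{alg:backward}, a counter $d_j$ is only ever decremented, and exactly once for each successor of $j$ that is popped and has its oracle $F$ applied; hence, starting from the out-degree of $j$, the value $d_j$ hits $0$ at most once -- precisely when the last such successor is popped -- assuming inductively that every successor is popped exactly once. Since a node $j\neq o$ is pushed onto $Q$ only at the instant $d_j$ drops to $0$ (and $j$ is dependent), while $o$ is pushed once at initialization and never again (nothing has $o$ as an input), fact (b) already shows that every dependent node is pushed at most once, hence, by the FIFO discipline of the queue, popped (``visited'') at most once.

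The heart of the argument is the induction giving the ``at least once'' part together with the ordering. I would show, by induction on $i$ running from $n$ down to $n'+1$: (i) node $i$ is popped exactly once; (ii) every node depending on $i$ is popped strictly before $i$ is popped. The base case $i=n=o$ is immediate. For the step, fix a dependent node $i<n$ and assume (i)--(ii) for all larger indices (all of which are dependent since $i\ge n'+1$). Each direct successor $k$ of $i$ has $k>i$, so by (i) it is popped once; when it is, $F_k$ runs and decrements $d_i$. By fact (b), once the last direct successor has been popped we have $d_i=0$, and $i$ -- being dependent -- is pushed at that moment and only then, so $i$ is popped exactly once, giving (i); moreover every direct successor of $i$ has been popped before $i$ is pushed, hence before $i$ is popped. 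For (ii), given any $k$ depending on $i$ take a path $i\to v_1\to\cdots\to k$: then $v_1$ is a direct successor of $i$ and is popped before $i$, and either $k=v_1$ or $k$ depends on $v_1$, in which case (ii) for $v_1$ (which has a larger index) places the pop of $k$ before that of $v_1$; either way $k$ is popped before $i$.

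I expect the only real difficulty to be careful bookkeeping rather than any conceptual subtlety: one must be precise that $d_i$ reaches $0$ exactly once and only after all of $i$'s successors have been processed, and must translate the FIFO order of the BFS queue into the ``popped-before'' statements used in (ii); it also helps to note explicitly that on any path from $i$ to a node depending on it, all intermediate nodes are themselves dependent with larger index, so the inductive hypothesis applies along the whole path. Once this is in place, the lemma follows, and it is exactly what is needed for the soundness proof of Theorem~\ref{theorem:backward}, since (ii) guarantees that $\underline{\rmA}_i,\overline{\rmA}_i$ have received all their contributions by the time node $i$ is processed.
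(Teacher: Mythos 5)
Your proof is correct and follows essentially the same route as the paper's: a downward induction over the topologically sorted indices from $o=n$ to $n'+1$, using the semantics of the counters $d_i$ to show each dependent node is pushed exactly when its last successor is processed, hence visited exactly once and only after all nodes depending on it. The extra bookkeeping you supply (that GetOutDegree initializes $d_i$ to the out-degree in the pruned graph, and the explicit ordering statement (ii) along paths) just makes explicit steps the paper treats more briefly.
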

\begin{proof}
First, node $o$ is added to the queue and will be visited, and since it has no successor node, it will not be added to the queue again during the BFS.
We assume that node $i\dots n$ will be visited once and only once, and this is initially true with $i=o=n$.
For $i-1>n'$, we show that node $(i-1)$ will also be visited once and only once.
When node $i\dots n$ have all been visited, the successor nodes of node $(i-1)$ have been visited and $d_{i-1}=0$, and node $(i-1)$ is a dependent node.
Therefore, node $(i-1)$ will be added to the queue and visited.
From the assumption on node $i\dots n$, all nodes that depend on the successor nodes of node $(i-1)$ have also been visited.
Nodes that depend on node $(i-1)$ consist of the successor nodes of node $(i-1)$ and nodes that depend on these successors, and thus they have all been visited.
Since node $i\dots n$ will not be visited more than once, node $(i-1)$ will not be added to the queue by its successor nodes more than once.
Therefore, node $(i-1)$ will also be visited once and only once.
Using mathematical induction, we can prove that the lemma holds true for all node $i(i>n')$.
\end{proof}

According to Lemma~\ref{lemma:alg2}, every dependent node $i$ is visited once and exactly once.
When node $i$ is visited, Algorithm~\ref{alg:backward} performs the following changes to attributes 
$\underline{\rvd},~\overline{\rvd},~\underline{\rmA}_i,~\overline{\rmA}_i$ and $\underline{\rmA}_j,\overline{\rmA}_j (\forall j\in u(i))$:
\begin{equation}
    \underline{\rmA}_{j}+\!\!=\underline{\newA}_j,\ \ \overline{\rmA}_{j}+\!\!=\overline{\newA}_j,\ \ d_j-\!\!=1\quad\forall j\in u(i),\label{eq:change_1}
\end{equation}
\begin{equation}
    \underline{\rvd}+\!\!=\underline{\newd},\ \ \overline{\rvd}+\!\!=\overline{\newd},\ \ \underline{\rmA}_i\!\leftarrow\!\vzero, \ \ \overline{\rmA}_i\!\leftarrow\!\vzero,\label{eq:change_2}
\end{equation}
where $\underline{\newA}_j, \overline{\newA}_j, \underline{\mDelta}_j, \overline{\mDelta}_j$ come from oracle function $F_i$ as shown in \eqref{eq:backward_local}, and 
\begin{align*}
	\sum_{j\in u(i)} \underline{\newA}_j h_j(\rmX) + \underline{\newd} 
	\leq \underline{\rmA}_ih_i(\rmX),
	\enskip
	\overline{\rmA}_ih_i(\rmX)\leq
	\sum_{j\in u(i)} \overline{\newA}_j h_{j}(\rmX) + \overline{\newd}.
\end{align*}

Thereby, with changes in \eqref{eq:change_1} and \eqref{eq:change_2}, the linear lower bound in  \eqref{eq:apd_backward} becomes
\begin{align}
    h_o(\rmX)
    &\geq  \sum_{k\in\rmV} \underline{\rmA}_k h_k(\rmX) + \underline{\rvd}\nonumber\\
    &= \sum_{k\in\rmV,k\neq i,k\notin u(i)} \underline{\rmA}_k h_k(\rmX) +  \sum_{j\in u(i)} \underline{\rmA}_j h_j(\rmX) + \underline{\rmA}_i h_i(\rmX) +  \underline{\rvd}\nonumber\\
    &\geq \sum_{k\in\rmV,k\neq i,k\notin u(i)} \underline{\rmA}_k h_k(\rmX) +  \sum_{j\in u(i)} \underline{\rmA}_j h_j(\rmX) + 
    \sum_{j\in u(i)} \underline{\newA}_j h_j(\rmX) + \underline{\newd}  +  \underline{\rvd}\nonumber\\
    &=     \sum_{k\in\rmV,k\neq i,k\notin u(i)} \underline{\rmA}_k h_k(\rmX) +  \sum_{j\in u(i)} (\underline{\rmA}_j+\underline{\newA}_j) h_j(\rmX)
     + (\underline{\newd}  +  \underline{\rvd}),
     \label{eq:update_lin_bound}
\end{align}
which remains a valid linear lower bound in the form of \eqref{eq:apd_backward}.
Similarly, this also holds true for the linear upper bound.
In this way, $\underline{\rmA}_i$ and $\overline{\rmA}_i$ are propagated to its input nodes and set to $\vzero$.
Thereby the term w.r.t. $h_i(\rmX)$ is eliminated in the linear bounds, as shown in \eqref{eq:update_lin_bound}.

At this time, all successor nodes of node $i$ have been visited and will not been visited again.
Therefore, $\underline{\rmA}_i$ and $\overline{\rmA}_i$ will keep to be $\vzero$ after node $i$ is visited.
Therefore, when Algorithm~\ref{alg:backward} terminates, $\underline{\rmA}_i, \overline{\rmA}_i$ of all dependent node $i$ will be $\vzero$,
and thereby we will obtain linear bounds of node $o$ w.r.t. all the independent nodes.

\subsection{Proof of Theorem 2}
\label{apd:theorem2_proof}

Theorem~\ref{theorem:dp} shows that linear bounds under perturbation defined by synonym-based word substitution can be concretized with a dynamic programming.
Specifically, to concretize a linear lower bound, we need to compute
\begin{equation}
\begin{aligned}
    \underline{\rvh}_o &= \min_{\hat{w}_1,\hat{w}_2,\dots,\hat{w}_n}\underline{\rvb}_o+\sum_{t=1}^n \underline{\Tilde{\rmW}}_t e(\hat{w}_t)\quad
    \text{s.t.} &\sum_{t = 1}^n I(\hat{w}_t \neq w_t) \leq \delta,
\label{eq:apd_dp_goal}
\end{aligned}
\end{equation}
where $e(\hat{w}_t)$ is embedding of the $t$-th word in the input, 
$\underline{\Tilde{\rmW}}_t$ are columns in $\underline{\rmW}_o$ corresponding to the coefficients of $e(\hat{w}_t)$ in the linear bound.
In the dynamic programming, we compute 
$\underline{\rvg}_{i,j}(j\leq i)$ that denotes the lower bound of 
$
\underline{\rvb}_o + \sum_{t=1}^i \underline{\Tilde{\rmW}}_t e(\hat{w}_t)
$
when $j$ words among the first $i$ words $\hat{w}_1,  \dots, \hat{w}_i$ have been replaced.
If $\hat{w}_k$ has not been replaced, $\hat{w}_k=w_k$, otherwise $\hat{w}_k\in \sS(w_k)$.

For $i=0$, obviously $\underline{\rvg}_{0,0}=\underline{\rvb}_o$.
For $j=0$, $\hat{w}_1, \hat{w}_2, \cdots, \hat{w}_i$ must have not been replaced and thus $\hat{w}_t=w_t(1\leq t\leq i)$ holds true.
Therefore, 
$ \underline{\rvg}_{i,0} = \underline{\rvb}_o + \sum_{t=1}^i \underline{\Tilde{\rmW}}_t e(w_t).
$
For $i,j>0$, we consider whether $\hat{w}_i$ has been replaced. 
If $\hat{w}_i$ has not been replaced,
$\underline{\Tilde{\rmW}}_i e(\hat{w}_i)=\underline{\Tilde{\rmW}}_i e(w_i)$,
and $j$ words have been replaced among the first $i-1$ words. 
In this case, 
$ \underline{\rvb}_o + \sum_{t=1}^i \underline{\Tilde{\rmW}}_t e(\hat{w}_t) = \underline{\rvb}_o + \sum_{t=1}^{i-1} \underline{\Tilde{\rmW}}_t e(\hat{w}_t) + \underline{\Tilde{\rmW}}_i e(w_i)\geq \underline{\rvg}_{i-1,j}+\underline{\Tilde{\rmW}}_i e(w_i) $.
For the other case if $\hat{w}_i$ has been replaced, $j-1$ words have been replaced among the first $i-1$ words, and  
$ \underline{\rvb}_o + \sum_{t=1}^i \underline{\Tilde{\rmW}}_t e(\hat{w}_t) \geq \underline{\rvg}_{i-1,j-1} + \min_{w'}\{ \underline{\Tilde{\rmW}}_i e(w') \} $, where $w'\in\sS(w_i)$.
We combine these two cases and take the minimum of their results, and thus:
\begin{equation*}
    \underline{\rvg}_{i,j} =
    	\min(
    		\underline{\rvg}_{i-1,j} + \underline{\Tilde{\rmW}}_i e(w_i),\ \ 
    \underline{\rvg}_{i-1,j-1} \!+\! \min\nolimits_{w'}\{\underline{\Tilde{\rmW}}_i e(w') \} 
    	) \ \  (i,j>0)\quad
    	\text{s.t.} \ \  w' \in \sS(w_i).
\end{equation*}
The result of \eqref{eq:apd_dp_goal} is  $\min_{j=0}^\delta\underline{\rvg}_{n,j}$.
The upper bounds can also be computed in a similar way simply by changing from taking the minimum to taking the maximum in the above derivation.

\subsection{Proof of Theorem 3}
\label{apd:theorem3_proof}

In Theorem~\ref{theorem:loss_fusion}, we show that given concrete lower and upper bounds of $g_\theta(\rmX,y)$ as $\underline{g}_\theta(\rmX,y)$  and $\overline{g}_\theta(\rmX,y)$, with $S(\rmX,y)\!=\!\sum_{i\leq K} \exp(-[g_\theta(\rmX,y)]_i)$, we have 
\begin{equation}
{
\max_{\rmX\in\sS} L(f_\theta(\rmX), y)
\leq \log\overline{S}(\rmX,y)
\leq L(-\underline{g}_\theta(\rmX,y),y),
}
\label{apd:theorem3}
\end{equation}
where $\overline{S}(\rmX,y)$ is the upper bound of $S(\rmX,y)$ from the backward mode LiRPA.

$L(f_\theta(\rmX),y)$ is the cross entropy loss with softmax normalization, and 
\begin{align*}
L(f_\theta(\rmX), y) 
&= -\log \frac{[\exp(f_\theta(\rmX))]_y}{\sum_{i\leq K} [\exp(f_\theta(\rmX))]_i}\\
&= \log \sum_{i\leq K} \exp( [f_\theta(\rmX)]_i- [f_\theta(\rmX)]_y )\\
&= \log \sum_{i\leq K} \exp (-[g_\theta(\rmX,y)]_i )\\
&= \log S(\rmX, y). 
\end{align*}
Since $\log$ is a monotonic function, 
$$\max_{\rmX\in\sS} L(f_\theta(\rmX), y) = \log \max_{\rmX\in\sS} S(\rmX,y)\leq \log \overline{S}(\rmX,y).$$
And $L(-\underline{g}_\theta(\rmX,y),y)$ is an upper bound of $\max_{\rmX\in\sS} L(f_\theta(\rmX), y)$, since 
\begin{align*}
\max_{\rmX\in\sS} L(f_\theta(\rmX), y)
&\leq 
 \log \sum_{i\leq K} \exp (-\min_{\rmX\in\sS}[g_\theta(\rmX,y)]_i )\\
 &\leq  \log \sum_{i\leq K} \exp (-[\underline{g}_\theta(\rmX,y)]_i )\\
&=L(-\underline{g}_\theta(\rmX,y),y). 
\end{align*}

\begin{figure}[htb]  
\centering
\includegraphics[width=.6\textwidth]{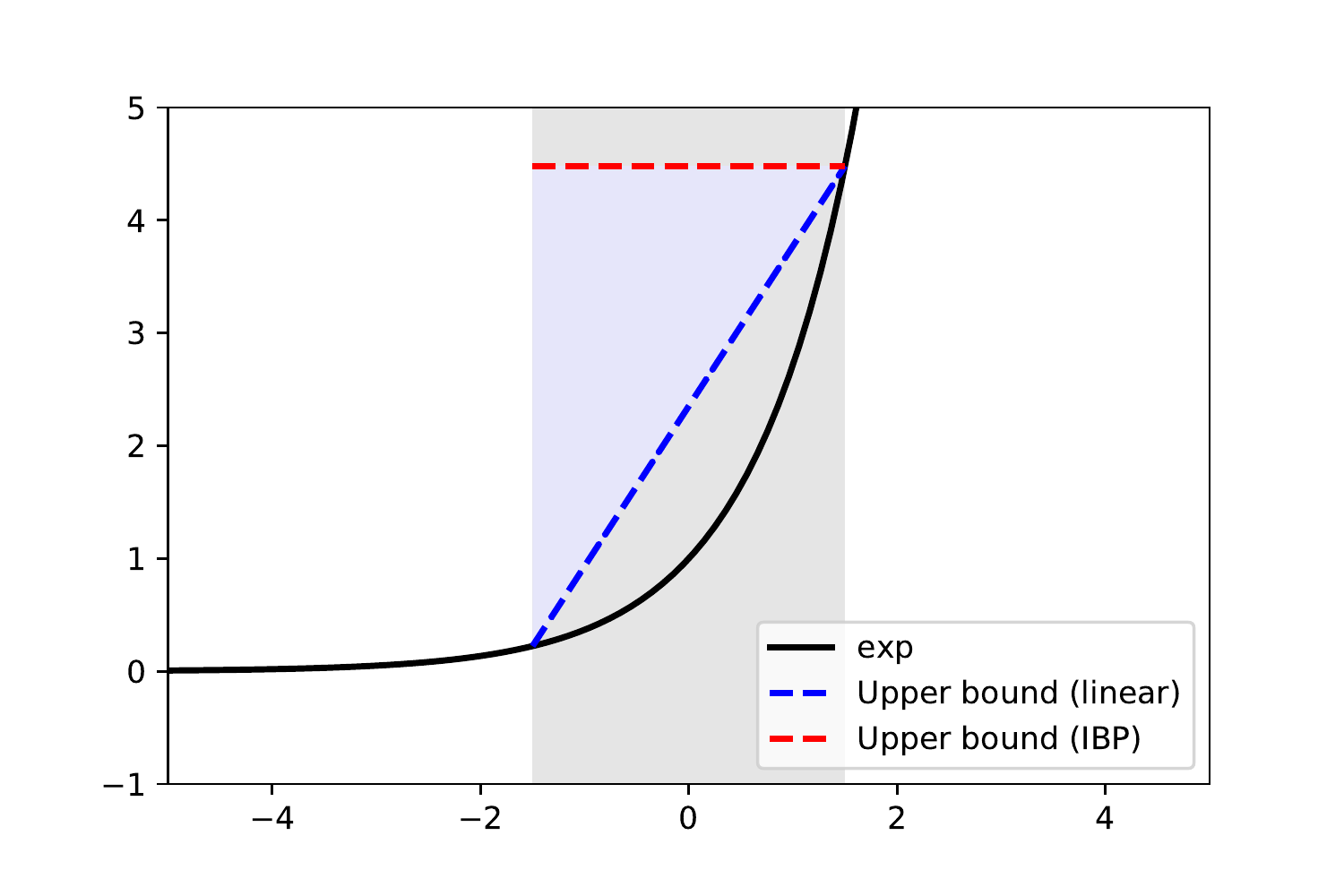} 
\caption {Illustration of different upper bounds of $\exp(x)$ within $x\in[-1.5,1.5]$. The linear bound (blue line) is a tighter bound than the IBP bound (red line).
The blue area stands for the gap between the two upper bounds. Note that for this particular setting of upper bounding $\overline{S}(\rmX, y)$ we need only upper bounds for this non-linear function.}
\label{fig:exp}
\end{figure}

Now we are going to show that $\log \overline{S}(\rmX, y) \leq L(-\underline{g}_\theta(\rmX, y), y)$. Here we assume that the concrete bounds of intermediate layers used for linear relaxations and also the concrete lower and upper bounds of $g_\theta(\rmX,y)$ (denoted as $\underline{g}_\theta(\rmX,y)$ and $\overline{g}_\theta(\rmX,y)$) are the same.


Computing $\sum_{i\leq K} \exp (-[\underline{g}_\theta(\rmX,y)]_i )$ is essentially propagating $\underline{g}_\theta(\rmX,y)$ through $\exp$ and summation in the loss function using IBP, while $\overline{S}(\rmX,y)$ is directly computed from the LiRPA bound of $S(\rmX,y)$.
Using $\Tilde{\rmA}$, a matrix of ones with size $1\times K$, to replace the summation, we can unify these two processes as computing the upper bound of 
$\Tilde{\rmA} \exp(-g_\theta(\rmX,y))$ using LiRPA with different relaxations for $\exp$.
For $\overline{S}(\rmX,y)$, the linear upper bound of $\exp(x)(l\leq x\leq u)$ is a line passing $(l,e^l)$ and $(u,e^u)$, while it is $e^u$ when computing $\sum_{i\leq K} \exp (-[\underline{g}_\theta(\rmX,y)]_i )$.
We illustrate the two different relaxations in Figure~\ref{fig:exp}.
Since elements in $\Tilde{\rmA}$ are all positive, the lower bound of $\exp(x)$ will not be involved, and thus with the same concrete bounds of $g_\theta$ the relaxation on $\exp$ in $\overline{S}(\rmX,y)$ is strictly tighter when $l<u$.

After relaxing $\exp$, we can obtain two linear upper bounds $\hat{\rmA} g_\theta(\rmX,y)+\hat{\rvd}$ from the two methods respectively,
where $\hat{\rmA}$ and $\hat{\rvd}$ are obtained by merging the relaxation of $\exp$ and $\Tilde{\rmA}$. 
Note that since the relaxed function $\exp(x) \leq e^u$ in IBP has no linear term, in this case $\hat{\rmA}=\vzero$ and the upper bound will simply be $\hat{\rvd}$. 
We then back propagate $\hat{\rmA} g_\theta(\rmX,y)+\hat{\rvd}$ to the input and concretize the bounds to get $\overline{S}(\rmX,y)$ and $\sum_{i\leq K} \exp (-[\underline{g}_\theta(\rmX,y)]_i )$ respectively.
Since in the calculation of linear bounds, the $\exp$ relaxation is the only difference and the relaxation for $\overline{S}(\rmX,y)$ is no looser than that for $\sum_{i\leq K} \exp (-[\underline{g}_\theta(\rmX,y)]_i )$, the upper linear bound of $\overline{S}(\rmX,y)$ is tighter than that of $\sum_{i\leq K} \exp (-[\underline{g}_\theta(\rmX,y)]_i )$, and we can conclude that for the concrete bounds $\overline{S}(\rmX,y)\leq\sum_{i\leq K} \exp (-[\underline{g}_\theta(\rmX,y)]_i )$ holds true, and thereby 
$\log\overline{S}(\rmX,y)
\leq L(-\underline{g}_\theta(\rmX,y),y)$.

\begin{remark}
Despite the assumptions involved above, in the implementation, we generally have different concrete bounds $\underline{g}_\theta(\rmX,y)$ and $\overline{g}_\theta(\rmX,y)$ for computing  $\overline{S}(\rmX, y)$ with loss fusion (e.g., our IBP+backward scheme), compared to the case of computing $L(-\underline{g}_\theta(\rmX,y))$ without loss fusion (e.g., the scheme used in CROWN-IBP~\citep{zhang2019towards}).
In the former case, $\underline{g}_\theta(\rmX,y)$ and $\overline{g}_\theta(\rmX,y)$ are regarded as intermediate bounds and obtained with IBP, while in the later case, $\underline{g}_\theta(\rmX,y)$ is obtained with LiRPA and $\overline{g}_\theta(\rmX,y)$ is unused.
Therefore, the relaxation on $\exp$ when using loss fusion may not be strictly tighter than the IBP bound in computing $L(-\underline{g}_\theta(\rmX,y))$.
\end{remark}

\section{Additional Details on Experiments}
\label{apd:additional_experiments}

\subsection{Details on Large-Scale Certified Defense}
\label{sec:app_vision_training}

\paragraph{Training settings}
In order to perform fair comparable experiments, for all experiments on training large-scale vision models (Table~\ref{table:vision_error_rate} and~\ref{tab:DSImageNet_error_rate}), we use a same setting for LiRPA and IBP. Across all datasets, the networks were trained using the Adam~\cite{kingma2014adam} optimizer with an initial learning rate of $5\times10^{-4}$. Also, gradient clipping with a maximum $\ell_2$ norm of $8$ is applied. We gradually increase $\epsilon$ within a fixed epoch length (800 epochs for CIFAR-10, 400 epochs for Tiny-ImageNet and 80 epochs for Downscaled-ImageNet). We uniformly divide the epoch length with a factor $0.4$, and exponentially increase $\epsilon$ during the former interval and linearly increase $\epsilon$ during the latter interval, so that to avoid a sudden growth of $\epsilon$ at the beginning stage. Following~\cite{zhang2019towards}, for LiRPA training, a hyperparameter $\beta$ to balance LiRPA bounds and IBP bounds for the output layer is set and gradually decreases from 1 to 0 (1 for only using LiRPA bounds and 0 for only using IBP bounds), as per the same schedule of $\epsilon$, and the end $\epsilon$ for training is set to $10\%$ higher than the one in test. All models are trained on 4 Nvidia GTX 1080TI GPUs (44GB GPU memory in total). For different datasets, we further have settings below:
\begin{itemize}
    \item \textbf{CIFAR-10} $\epsilon=\frac{8}{255}$.
    We train 2,000 epochs with batch size 256 in total, the first 200 epochs are clean training, then we gradually increase $\epsilon$ per batch with a $\epsilon$ schedule length of 800, finally we conduct 1,100 epochs pure IBP training. We decay the learning rate by $10\times$ at the 1,400-th and 1,700-th epochs respectively. During training, we add random flips and crops for data augmentation, and normalize each image channel, using the channel statistics from the training set.
    \item \textbf{Tiny-ImageNet} $\epsilon=\frac{1}{255}$. We train 800 epochs with batch size 120 in total (for WideResNet, we reduce batch size to 110 due to limited GPU memory), the first 100 epochs are clean training, then we gradually increase $\epsilon$  per batch with a $\epsilon$ schedule length of 400, finally we conduct 500 epochs of pure IBP training. We decay the learning rate by $10\times$ at the 600-th and 700-th epochs respectively. During training, we use random crops of 56 $\times$ 56 and random flips. During testing, we use a central 56 $\times$ 56 crop. We also normalize each image channel, using the channel statistics from the training set.
    \item \textbf{Downscaled-ImageNet} $\epsilon=\frac{1}{255}$. We train 240 epochs with batch size 110 in total, the first 100 epochs are clean training, then we gradually increase $\epsilon$  per batch with a $\epsilon$ schedule length of 80, finally we conduct 60 epochs of pure IBP training. We decay the learning rate by $10\times$ at the 200-th and 220-th epochs respectively. During training, we use random crops of 56 $\times$ 56 and random flips. During testing, we use a central 56 $\times$ 56 crop. We also normalize each image channel, using the channel statistics from the training set.
\end{itemize}
All verified error numbers are evaluated on the test set using IBP with $\epsilon=\frac{8}{255}$ for CIFAR-10 and  $\epsilon=\frac{1}{255}$ for Tiny-ImageNet and Downscaled-ImageNet.
\paragraph{Model Structures}
The details of vision model structures we used  are described bellow (note that we omit the final linear layer which has 10 neurons for CIFAR-10 and 200 neurons for Tiny-ImageNet):
\begin{itemize}
    \item \textbf{CNN-7+BN} $5\times$ Conv-BN-ReLU layers with $\{64, 64, 128, 128, 128\}$ filters respectively, and a linear layer with $512$ neurons.
    \item \textbf{DenseNet} $\{2, 4, 4\}$ Dense blocks with growth rate 32 and  a linear layer with $512$ neurons.
    \item \textbf{WideResNet} $3\times$ Wide basic blocks ($6\times$ Conv-ReLU-BN layers) with widen factor = 4 for CIFAR-10,  widen factor = 10 for Tiny-ImageNet and Downscaled-ImageNet. An additional linear layer with $512$ neurons is added for CIFAR-10.
    \item \textbf{ResNeXt} $\{1, 1, 1\}$ blocks for CIFAR-10 and $\{2, 2, 2\}$ blocks for Tiny-ImageNet and cardinality = 2, bottleneck width = 32 and a linear layer with $512$ neurons.

\end{itemize}

It is worthwhile to mention that both~\cite{zhang2019towards} and~\cite{zhu2020improving} conducted experiments on expensive 32 TPU cores which has up to 512 GB TPU memory in total. In comparison, our framework with loss fusion can be quite efficient working on 44 GB GPU memory.

Moreover, the running time with maximum batch size on 4 Nvidia GTX 1080TI GPUs of all models on two datasets is reported in Table~\ref{table:cost_max_batch}. Note that large-scale models cannot be trained with previous LiRPA methods without loss fusion, even if the mini-batch size on each GPU is only 1 for DenseNet and WideResNet.

\begin{table}[htb]
\centering
\caption{Per-epoch training time and memory usage of the 4 large models on CIFAR-10 and Tiny-ImageNet with maximum batch size for 4  Nvidia GTX 1080TI GPUs. ``LF''=loss fusion. ``OOM''= out of memory. Numbers in parentheses are relative to natural training time.}
\footnotesize
  \adjustbox{max width=1.\textwidth}{
\begin{tabular}{c|c|cccc|cccc}
\toprule[1pt]
\multirow{2}{*}{Data}& \multirow{2}{*}{Training method}& \multicolumn{4}{c|}{Wall clock time (s)}  & \multicolumn{4}{c}{Maximum batch size}            \\
          \cline{3-10}
          && Natural & IBP & LiRPA w/o LF & LiRPA w/ LF &  Natural & IBP & LiRPA w/o LF & LiRPA w/ LF  \\
          \hline
\multirow{4}{*}{CIFAR-10}&CNN-7+BN & 7.59 & 11.17 (1.54$\times$)& 46.52 (6.13$\times$) & 28.20 (3.71$\times$) & 9500 & 3000 & 600  & 1700 \\
&DenseNet & 9.23 & 37.25 (4.04$\times$) & 187.45 (20.31$\times$) & 74.54 (8.08$\times$) & 2500 & 800  & 150 & 400 \\
&WideResNet& 12.08 & 37.70 (3.12$\times$) & 236.66 (19.59$\times$) & 65.72 (5.44$\times$) & 3000 & 1000 & 160  & 550\\
&ResNeXt & 6.83 & 19.70 (2.88$\times$)& 130.37 (19.09$\times$) & 43.65 (6.39$\times$) & 4000 & 1200  & 260 & 700 \\
\midrule
\multirow{4}{*}{Tiny-ImageNet}
&CNN-7+BN & 22.17 &  56.54 (2.55$\times$) & 4344.05 (195.94$\times$)  & 98.04 (4.42$\times$) & 3600 & 1100  & 12  & 600  \\
&DenseNet & 50.60 & 223.63 (4.42$\times$)  & OOM  &  474.66 (9.38$\times$)  & 800 & 240 & OOM & 120 \\
&WideResNet & 98.01 & 370.68 (3.78$\times$)  & OOM & 604.70 (6.17$\times$)  & 600  & 200  & OOM & 110 \\
&ResNeXt & 21.52 & 59.42 (2.76$\times$)  & 5580.52 (259.32$\times$)  & 119.34 (5.55$\times$)   & 3200  & 900  & 12  & 500 \\

\bottomrule[1pt]
\end{tabular}
}
\label{table:cost_max_batch}
\end{table}

\subsection{Details on Verifying and Training NLP Models}
\label{apd:SST_training}
For the perturbation specification defined on synonym-based word substitution, each word $w$ has a substitution set $\sS(w)$, such that the actual input word $w'\in \{w\}\cup\sS(w)$. 
We adopt the approach for constructing substitution sets used by \citet{jia2019certified}. 
For a word $w$ in a input sentence, they first follow \citet{alzantot2018generating} to find the nearest 8 neighbors of $w$ in a counter-fitted word embedding space where synonyms are generally close while antonyms are generally far apart. 
They then apply a language model to only retain substitution words that the log-likelihood of the sentence after word substitution does not decrease by more than 5.0, which is also similar to the approach by \citet{alzantot2018generating}. 
We reuse their open-source code\footnote{\url{https://bit.ly/2KVxIFN}} to pre-compute the substitution sets of words in all the examples.
Note that although we use the same approach for constructing the lists of substitution words as \cite{jia2019certified}, our perturbation space is still different from theirs, because we follow  \citet{huang2019achieving} and allow setting a small budget $\delta$ that limits the maximum number of words to be replaced simultaneously~\citep{ko2019popqorn,gao2018black}.
We do not adopt the synonym list from \citet{huang2019achieving} as it appears to be not publicly available when this work is done.

We use two models in the experiments for sentiment classification: Transformer and LSTM. For Transformer, we use a one-layer model, with 4 attention heads, a hidden size of 64, and ReLU activations for feed-forward layers. 
Following \citet{shi2020robustness}, we also remove the variance related terms in layer normalization, which can make Transformer easier to be verified while keeping comparable clean accuracies. 
For the LSTM, we use a one-layer bidirectional model, with a hidden size of 64. 
The vocabulary is built from the training data and includes all the words that appear for at least twice. 
Input tokens to the models are truncated to no longer than 32. 

In the certified defense, although we are not using $\ell_p$ norm perturbations,
we have an artifial
$\epsilon$ that manually shrinks the gap between the clean input and perturbed input during the warmup stage, which makes the objective easier to be optimized~\cite{gowal2018effectiveness,jia2019certified}.
Specifically, for clean input word $w_i$ and actual input word $\hat{w}_i$, 
we shrink the gap between the embeddings of $w_i$ and $\hat{w}_i$ respectively:
$$ e(\hat{w}_i)\leftarrow \eps e(\hat{w}_i) + (1-\eps) e(w_i).$$
 
$\eps$ is linearly increased from 0 to 1 during the first 10 warmup epochs.
We then train the model for 15 more epochs with $\eps=1$.
During the first 20 epochs, all the nodes on the parse trees of training examples are used, and later we only use the root nodes, i.e., the full text only.
The models are trained using Adam optimizer~\cite{kingma2014adam}, and the learning rate is set to $10^{-4}$ for Transformer and $10^{-3}$ for LSTM. We also use gradient clipping with a maximum norm of 10.0. 
When using LiRPA bounds for training, we combine bounds by LiRPA and IBP weighted by a coefficient $\beta(0\leq \beta\leq 1)$ and $(1-\beta)$ respectively, and $\beta$ decreases from 1 to 0 during the warmup stage, following CROWN-IBP~\citep{zhang2019towards} as also mentioned in Appendix~\ref{sec:app_vision_training}.
In this setting, since we use pure IBP for training in the last epochs, we actually end up training the models on $\delta=\infty$ since IBP for LSTM and Transformer does not consider $\delta$ (see the next paragraph).
But we still use LiRPA bounds with the given non-trivial $\delta$ for testing.
Alternatively, for \emph{IBP+Backward (alt.)} in the experiments, we always use LiRPA bounds and set $\beta=1$.
And for this setting, the models tend to have a lower verified accuracy when tested on a $\delta$ larger than that in the training, as shown in Sec.~\ref{sec:exp}.

\citet{huang2019achieving} has a convex hull method to handle word replacement with a budget limit $\delta$ in IBP.
For a word sequence $w_1, w_2, \cdots, w_l$, they  construct a convex hull for the input node $1$.
They consider the perturbation of each word $w_i$, and for each possible $\hat{w}_i\in \{ w_i \} \cup \sS(w_i)$, they add vector
$
	[e(w_{1\cdots i-1}); 
	e(w_i)+\delta(e(\hat{w}_i)-e(w_i));
	e(w_{i+1\cdots l})]
$
to the convex hull.
The convex hull is an over-estimation of $h_1(\rmX)$.
They require the first layer of the network to be an affine layer and concretize the convex hull to interval bounds after passing the first layer, where each vertex in the convex hull is passed through the first layer respectively and they then take the interval lower and upper bound of all the vertexes in the convex hull.
They worked on CNN, but on Transformer when there is no interaction between different sequence positions in the first layer, their method is a $(\delta-1)$-time more over-estimation than simply assuming all the words can be replaced at the same time, and this method cannot work either when the first layer is not an affine layer.
Therefore, for verifying and training LSTM and Transformer with IBP, we can only adopt the baseline in \citet{jia2019certified} without considering $\delta$.
In contrast, our dynamic programming method for concretizing linear bounds under the synonym-based word substitution scenario in Sec.~\ref{sec:spec} takes the budget into consideration regardless of the network structure.

\subsection{Details on Training for a Flat Objective}
\label{sec:app_detail_flat}
\paragraph{Hyperparameter Setting}
\label{apd:weight_perturbation}
For training the three-layer MLP model we used in weight perturbation experiments, we follow similar training strategy in vision models. The differences are summarized here: We use the SGD optimizer with an initial learning rate of $0.1$ and decay the learning rate with a factor of $0.5$ after $\epsilon$ increases. We use $\ell_2$ norm with $\epsilon = 0.1$ to bound the weights of all three layers and linearly increase $\epsilon$ per batch.

\paragraph{Certified Flatness} Using bounds obtained from LiRPA, we can obtain a certified upper bound on training loss. We define the flatness based on certified training cross entropy loss at a point $\theta^* = [\rvw^*_1, \rvw^*_2, \cdots, \rvw^*_K]$ as:
\begin{equation}
    \mathcal{F} = \mathcal{L}(-\underline{\rvh}(\rvx, \theta^*, \bm{\epsilon});\, y) - \mathcal{L}(\rvh(\rvx, \theta^*);\, y) \geq \max_{\rvw \in \sS} \mathcal{L}(\theta) - \mathcal{L}(\theta^*).
    \label{eq:flatness}
\end{equation}
A small $\mathcal{F}$ guarantees that $\mathcal{L}$ does not change wildly around $\theta^*$. Note that since the weight of each layer can be in quite different scales, we use a normalized $\overline{\epsilon} = 0.01$ and set $\epsilon_i = \|\rvw_i\|_2 \overline{\epsilon}$. This also allows us to make fair comparisons between models with weights in different scales. The flatness $\mathcal{F}$ of the models we obtained are shown in Table\,\ref{table: flatness}. As we can see, the models trained by ``flat'' objective show extraordinary smaller flatness $\mathcal{F}$ compare with the nature trained models on bot MNIST and FashionMNIST with all combination of dataset sizes and batch sizes. The results also fit the observation of training loss landscape in Figure~\ref{fig: optimization_loss}.

\begin{table}[htb]\small
 \centering
  \caption{The flatness $\mathcal{F}$ of naturally trained models and models trained using the ``flat'' objective~\eqref{eq:flatness} with different dataset sizes (10\%, 1\%) and batch sizes ($0.01N$, $0.1N$, $N$). A small $\mathcal{F}$ guarantees that $\mathcal{L}$ does not change wildly around $\theta^*$ (model parameters found by SGD). The flat objective provably reduces the range of objective around $\theta^*$.
  } 
    \adjustbox{max width=.5\textwidth}{
\begin{tabular}{c|ccc|ccc}
\toprule[1pt]
  & \multicolumn{6}{c}{ MNIST  }  \\
\cline{2-7}
 & \multicolumn{3}{c|}{ nature training  }  & \multicolumn{3}{c}{ ``flat'' objective  }\\
\cline{2-7}
            & $0.01N$ & $0.1N$ & $N$  & $0.01N$ & $0.1N$ & $N$ \\
\midrule[1pt]
$10\%$       &  2.79  & 3.45  & 4.55 & 0.97  & 1.12  & 1.83  \\
 \midrule
$1\%$       & 2.96 & 3.85 & 4.77 & 1.10 & 0.95 & 1.44 \\
 \midrule
& \multicolumn{6}{c}{ FashionMNIST  }\\

 \midrule[1pt]
$10\%$       & 7.89    & 7.95   & 9.60 & 2.49  & 1.81  & 1.94    \\
 \midrule
$1\%$     & 7.86    & 6.43  & 9.55 & 2.52 & 1.79 & 1.98  \\

 \bottomrule[1pt]
\end{tabular}
}
  \label{table: flatness}
\end{table}

\end{document}